\documentclass{article} 
\usepackage{iclr2020_conference,times}


\usepackage{amsmath,amsfonts,bm}









\def\eqref#1{equation~\ref{#1}}









\def\1{\bm{1}}










\DeclareMathAlphabet{\mathsfit}{\encodingdefault}{\sfdefault}{m}{sl}
\SetMathAlphabet{\mathsfit}{bold}{\encodingdefault}{\sfdefault}{bx}{n}













\usepackage{hyperref}
\usepackage{url}
\usepackage[utf8]{inputenc} 
\usepackage[T1]{fontenc}    
\usepackage{booktabs}       
\usepackage{amsfonts,amsthm}       
\usepackage{nicefrac}       
\usepackage{microtype}      
\usepackage{bm} 
\usepackage{graphicx}
\usepackage{color}

\usepackage{caption}
\usepackage{subfigure}
\usepackage{graphbox,multirow}
\usepackage[para,online,flushleft]{threeparttable}
\usepackage{enumitem}
\usepackage{amsmath,mathtools}
\usepackage{amssymb}
\usepackage{makecell}

\newtheorem{theorem}{Theorem}

\newtheorem{lemma}[theorem]{Lemma}

\newtheorem{proposition}{Proposition}

\title{Bridging Mode Connectivity in Loss \\ Landscapes and Adversarial Robustness}



\author{  \textbf{Pu Zhao\textsuperscript{\rm 1}, Pin-Yu Chen\textsuperscript{\rm 2} , Payel Das\textsuperscript{\rm 2}, Karthikeyan Natesan Ramamurthy\textsuperscript{\rm 2}, Xue Lin\textsuperscript{\rm 1}}\\ 
\textsuperscript{\rm 1}Northeastern University, Boston, MA 02115\\ 
\textsuperscript{\rm 2}IBM Research, Yorktown Heights, NY 10598\\
zhao.pu@husky.neu.edu, pin-yu.chen@ibm.com, daspa@us.ibm.com, \\
knatesa@us.ibm.com, xue.lin@northeastern.edu 
}

%

\iclrfinalcopy 
\begin{document}

\maketitle

\begin{abstract}
 Mode connectivity 
  provides novel geometric insights on analyzing loss landscapes and
  enables building high-accuracy pathways between well-trained neural networks. In this work, we propose to employ mode connectivity in loss landscapes to study the adversarial robustness of deep neural networks, and provide novel methods for improving this robustness. Our experiments cover various types of adversarial attacks applied to different network architectures and datasets. When network models are tampered with backdoor or error-injection attacks, our results demonstrate that the path connection learned using limited amount of bonafide data can effectively mitigate adversarial effects while maintaining the original accuracy on clean data. Therefore, mode connectivity provides users with the power to repair backdoored or error-injected models. We also use mode connectivity to investigate the loss landscapes of regular and robust models against evasion attacks. Experiments show that there exists a barrier in adversarial robustness loss on the path connecting regular and adversarially-trained models. A high correlation is observed between the adversarial robustness loss and the largest eigenvalue of the input Hessian matrix, for which theoretical justifications are provided. Our results suggest that mode connectivity offers a holistic tool and practical means for evaluating and improving adversarial robustness\footnote{The code is available at \url{https://github.com/IBM/model-sanitization}}.
\end{abstract}

\section{Introduction}
\label{sec_intro}

Recent studies on mode connectivity show that two independently trained deep neural network (DNN) models with the same architecture and loss function can be connected on their loss landscape using a  high-accuracy/low-loss path characterized by a simple curve \citep{garipov2018loss,gotmare2018using,draxler2018essentially}. This insight on the loss landscape geometry provides us with easy access to a large number of similar-performing models on the low-loss path between two given models, and  \citet{garipov2018loss} use this to devise a new model ensembling method. Another line of recent research reveals interesting geometric properties relating to  adversarial robustness of DNNs \citep{fawzi2017robustness,fawzi2018empirical, wang2018towards, yu2018interpreting}. An adversarial data or model is defined to be one that is close to a bonafide data or model in some space, but exhibits unwanted or malicious behavior. Motivated by these geometric perspectives, in this study, we propose to employ mode connectivity to study and improve adversarial robustness of DNNs against different types of threats.

A DNN can be possibly tampered by an adversary during different phases in its life cycle. For example, during the training phase, the training data can be corrupted with a designated trigger pattern associated with a target label to implant a backdoor for trojan attack on DNNs \citep{BadNet_Access,liu2017trojaning}. During the inference phase when a trained model is deployed for task-solving, prediction-evasive attacks are plausible \citep{biggio2018wild,goodfellow2014explaining,zhao2018admm}, even when the model internal details are unknown to an attacker \citep{chen2017zoo,ilyas2018black,zhao2019design}. In this research, we will demonstrate that by using mode connectivity in loss landscapes, we can repair backdoored or error-injected DNNs. We also show that mode connectivity analysis reveals the existence of a robustness loss barrier on the path connecting regular and adversarially-trained models.


We motivate the novelty and benefit of using mode connectivity for mitigating training-phase adversarial threats through the following practical scenario: as training DNNs is both time- and resource-consuming, it has become a common trend for users to leverage pre-trained models released in the public domain\footnote{For example, the Model Zoo project: \url{https://modelzoo.co}}. Users may then perform model fine-tuning or transfer learning with a small set of bonafide data that they have. However, publicly available pre-trained models may carry an unknown but significant risk of tampering by an adversary. It can also be challenging to detect this tampering, as in the case of a backdoor attack\footnote{See the recent call for proposals on Trojans in AI announced by IARPA: \url{https://www.iarpa.gov/index.php/research-programs/trojai/trojai-baa}}, since a backdoored model will behave like a regular model in the absence of the embedded trigger. Therefore, it is practically helpful to provide tools to users who wish to utilize pre-trained models while mitigating such adversarial threats. We show that our proposed method using mode connectivity with limited amount of bonafide data can repair backdoored or error-injected DNNs, while greatly countering their adversarial effects. 

Our main contributions are summarized as follows:
\begin{itemize}[leftmargin=*]
    \item For backdoor and error-injection attacks, we show that the path trained using limited bonafide data connecting two tampered models can be used to repair and redeem the attacked models, thereby resulting in high-accuracy and low-risk models. The performance of mode connectivity is significantly better than several baselines including fine-tuning, training from scratch, pruning, and random weight perturbations. \textcolor{black}{We also provide technical explanations for the effectiveness of our path connection method based on model weight space exploration and similarity analysis of input gradients for clean and tampered data.}
    
    \item For evasion attacks, we use mode connectivity to study standard and adversarial-robustness loss landscapes. We find that between a regular and an adversarially-trained model, training a path with standard loss reveals no barrier, whereas the robustness loss on the same path reveals a barrier. This insight provides a geometric interpretation of the ``no free lunch'' hypothesis in adversarial robustness \citep{tsipras2019robustness,dohmatob2018limitations,bubeck2018adversarial}. We also provide technical explanations for the high correlation observed between the robustness loss and the largest eigenvalue of the input Hessian matrix on the path.
    
    \item Our experimental results on different DNN architectures (ResNet and VGG) and datasets (CIFAR-10 and SVHN) corroborate the effectiveness of using mode connectivity in loss landscapes to understand and improve adversarial robustness. \textcolor{black}{We also show that our path connection is resilient to the considered adaptive attacks that are aware of our defense.}
    To the best of our knowledge, this is the first work that proposes using mode connectivity approaches for adversarial robustness.
\end{itemize}

\section{Background and Related Work}

\subsection{Mode Connectivity in Loss Landscapes}
Let $w_1$ and $w_2$ be two sets of model weights corresponding to two neural networks independently trained by minimizing any user-specified loss $l(w)$, such as the cross-entropy loss. 
Moreover, let ${\phi _\theta }(t)$ with $t \in [0,1]$  be a continuous piece-wise smooth parametric curve, with parameters $\theta$, such that its two ends are ${\phi _\theta }(0)=w_1$ and ${\phi _\theta }(1)=w_2$.

To find a high-accuracy path  between $w_1$ and $w_2$, it is proposed to find the parameters $\theta$ that minimize the expectation over a uniform distribution on the curve \citep{garipov2018loss},
\begin{equation}
L({\bf{\theta }}) = {E_{t \sim {q_\theta }(t)}}\left[ {l({\phi _\theta }(t))} \right]
\end{equation}
where ${q_\theta }(t)$ is the distribution for sampling the models on the path indexed by $t$.

Since ${q_\theta }(t)$ depends on $\theta$, in order to render the training of high-accuracy path connection more computationally tractable, \citep{garipov2018loss,gotmare2018using} proposed to instead use the following loss term,
\begin{equation}
\label{eqn_path_loss}
L({\bf{\theta }}) = {E_{t \sim U(0,1)}}\left[ {l({\phi _\theta }(t))} \right]
\end{equation}
where $U(0,1)$ is the uniform distribution on $[0,1]$.  

The following functions are commonly used for characterizing the parametric curve function ${\phi _\theta }(t)$.
\textbf{Polygonal chain \citep{gomes2012computer}.} 
The two trained networks $w_1$  and $w_2$ serve as the endpoints of the chain and the bends of the chain are parameterized by $\theta$. For instance, the case of a chain with one bend is
\begin{equation} \label{equ_poly}
{\phi _\theta }(t) = \left\{ {\begin{array}{*{20}{c}}
{2\left( {t{\bf{\theta }} + \left( {0.5 - t} \right){{\bf{\omega }}_1}} \right),}&{0 \le t \le 0.5}\\
{2\left( {\left( {t - 0.5} \right){{\bf{\omega }}_2} + \left( {1 - t} \right){\bf{\theta }}} \right),}&{0.5 \le t \le 1.}
\end{array}} \right.
\end{equation}

\textbf{Bezier curve \citep{rida2012bernstein}.}  A Bezier curve  provides a convenient parametrization of smoothness on the paths connecting endpoints. For instance, a quadratic Bezier curve with endpoints $w_1$  and $w_2$  is given by
\begin{equation}
\label{eqn_Bezier_quad}
{\phi _\theta }(t) = \begin{array}{*{20}{c}}
{{{\left( {1 - t} \right)}^2}{{\bf{\omega }}_1} + 2t\left( {1 - t} \right){\bf{\theta }} + {t^2}{{\bf{\omega }}_2},}&{0 \le t \le 1.}
\end{array}
\end{equation}

It is worth noting that, while current research on mode connectivity mainly focuses on generalization analysis \citep{garipov2018loss,gotmare2018using,draxler2018essentially,wang2018identifying} and
has found remarkable applications such as fast model ensembling \citep{garipov2018loss}, our results show that its implication on adversarial robustness through the lens of loss landscape analysis is a promising, yet largely unexplored, research direction. \citet{yu2018interpreting} scratched the surface but focused on interpreting decision surface of input space and only considered evasion attacks.


\subsection{Backdoor,  Evasion, and Error-Injection Adversarial Attacks}

\textbf{Backdoor attack.}
Backdoor attack on DNNs is often accomplished by designing a designated trigger pattern with a target label implanted to a subset of training data, which is a specific form of data poisoning  \citep{Biggio2012poison,shafahi2018poison,jagielski2018manipulating}. A backdoored model trained on the corrupted data will output the target label for any data input with the trigger; and it will behave as a normal model when the trigger is absent. For mitigating backdoor attacks,  majority of research focuses on backdoor detection or filtering anomalous data samples from training data for re-training \citep{chen2018detecting,wang2019neural,tran2018spectral}, while our aim is to repair backdoored models for models using mode connectivity and limited amount of bonafide data.

\textbf{Evasion attack.}
Evasion attack is a type of inference-phase adversarial threat that generates adversarial examples by mounting slight modification on a benign data sample to manipulate model prediction \citep{biggio2018wild,wang2018defensive}.
For image classification models, evasion attack can be accomplished by 
adding imperceptible noises to natural images and resulting in misclassification \citep{goodfellow2014explaining,carlini2017towards,xu2018structured,8646578}. Different from training-phase attacks, evasion attack does not assume access to training data. Moreover, it can be executed even when the model details are unknown to an adversary, via black-box or transfer attacks \citep{papernot2017practical,chen2017zoo,zhao2020zongd}.

\textbf{Error-injection attack.}
Different from attacks modifying data inputs, error-injection attack injects errors to model weights at the inference phase and aims to cause misclassification of certain input samples \citep{liu2017fault,pu2019fault}. At the hardware level of a deployed machine learning system, it can be made plausible via laser beam \citep{barenghi2012fault} and row hammer  \citep{van2016drammer} to change or flip the logic values of the corresponding bits and thus modifying the model parameters saved in memory.

\section{Main Results}
Here we report the experimental results, provide technical explanations, and elucidate the effectiveness of using mode connectivity for studying and enhancing adversarial robustness in three representative themes: (i) backdoor attack; (ii) error-injection attack; and (iii) evasion attack. Our experiments were conducted on different network architectures (VGG and ResNet) and datasets (CIFAR-10 and SVHN). The details on experiment setups are given in Appendix \ref{appen_network}.
When connecting models, we use the cross entropy loss and the quadratic Bezier curve as described in (\ref{eqn_Bezier_quad}). In what follows, we begin by illustrating the problem setups bridging mode connectivity and adversarial robustness, summarizing the results of high-accuracy (low-loss) pathways between untampered models for reference, and then delving into detailed discussions. Depending on the context, we will use the terms error rate and accuracy on clean/adversarial samples interchangeably. The error rate of adversarial samples is equivalent to their attack failure rate  as well as 100\%- attack accuracy.

\subsection{Problem Setup and Mode Connection between Untampered Models}
\label{subsec_connect_untampered}
\textbf{Problem setup for backdoor and error-injection attacks.}
We consider the practical scenario as motivated in Section \ref{sec_intro}, where a user has two potentially tampered models and a limited number of bonafide data at hand. The tampered models behave normally as untampered ones on non-triggered/non-targeted inputs so the user aims to fully exploit the model power while alleviating adversarial effects. The problem setup applies to the case of one tampered model, where we use the bonafide data to train a fine-tuned model and then connect the given and the fine-tuned models.

\textbf{Problem setup for evasion attack.} For gaining deeper understanding on evasion attacks, we consider the scenario where a user has access to the entire training dataset and aims to study the behavior of the models on the path connecting two independently trained models in terms of standard and robust loss landscapes, including model pairs selected from regular and adversarially-trained models.

\textbf{Regular path connection between untampered models.}  Figure \ref{fig: regular_cifar_VGG} shows the cross entropy loss and training/test error rate of models on the path connecting untampered models. The untampered models are independently trained using the entire training data. While prior results have demonstrated high-accuracy path connection using the entire training data \citep{garipov2018loss,gotmare2018using,draxler2018essentially}, our path connection is trained using different portion of the original test data corresponding to the scenario of limited amount of bonafide data. Notably, when connecting two DNNs, a small number of clean data is capable of finding models with good performance. For example, path connection using merely 1000/2500 CIFAR-10 samples only reduces the test accuracy (on other 5000 samples) of VGG16 models by at most 10\%/5\% when compared to the  well-trained models (those at $t=0$ and $t=1$), respectively. In addition, regardless of the data size used for path connection, the model having the worst performance is usually located around the point $t=0.5$, as it is geometrically the farthest model from the two end models on the path.

\begin{figure}[t]    
\hspace{-5.5mm}
 \centering
\begin{tabular}{p{1.9in}p{1.9in}p{1.2in}}
\parbox{1.9in}{\centering  Inference on training set}
 & \parbox{1.9in}{\centering  Inference on test set}
  & \parbox{1.2in}{\centering  Legend}\\
  \includegraphics[align=c,width=1.9in]{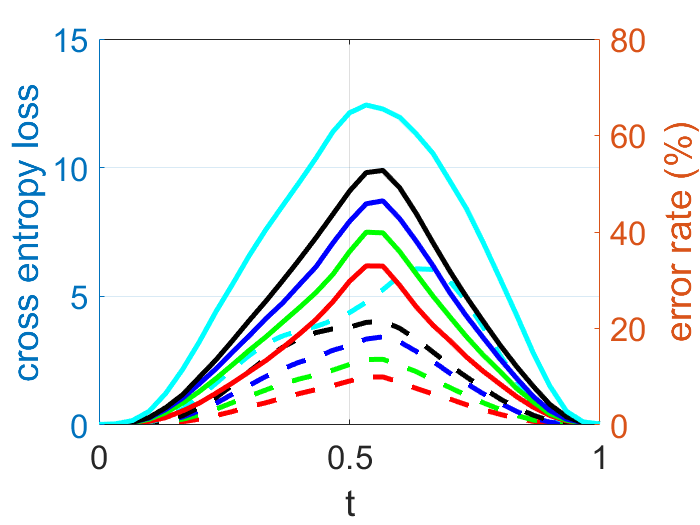}  &
\includegraphics[align=c,width=1.9in]{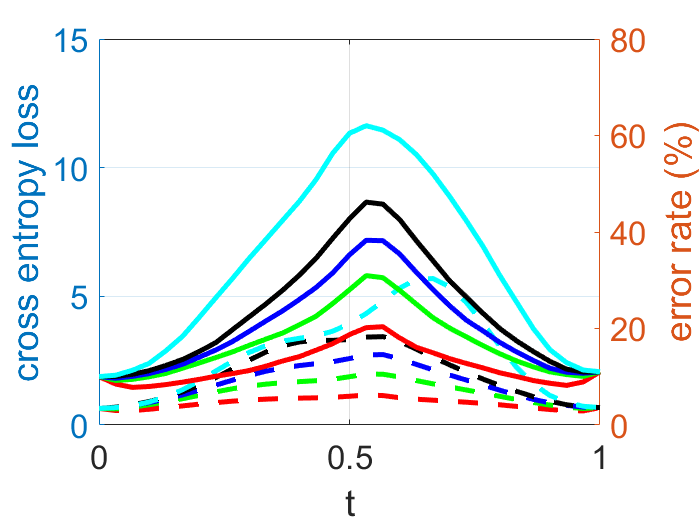} & 
\includegraphics[align=c,width=1.2in]{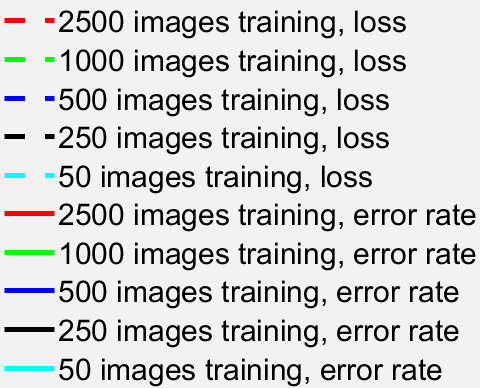} 
\end{tabular}
\vspace{-2mm}
\caption{Loss and error rate on the path connecting two untampered VGG models trained on CIFAR-10. The path connection is trained using different settings as indicated by the curve colors. The results using SVHN and ResNet are given in Appendix \ref{appen_untampered}. The inference results on test set  are evaluated  using \textcolor{black}{5000} samples, which are separate from what are used for path connection.  }
\label{fig: regular_cifar_VGG}
\end{figure}

\begin{table}[tb]
\begin{center}
\caption{Error rate of backdoored models. The error rate of clean/backdoored samples means  standard-test-error/attack-failure-rate, respectively.
The results are evaluated on 5000 non-overlapping clean/triggered images selected from the test set. For reference, the test errors of clean images on untampered models are 12\% for CIFAR-10 (VGG), and  4\% for SVHN (ResNet), respectively. }
\label{tab: backdoor_clean_trigger}
\scalebox{0.82}{
\begin{threeparttable}
\begin{tabular}{c|c|c|c|c|c}
\toprule[1pt]
 & Backdoor attacks  &    \multicolumn{2}{c|}{Single-target attack} &    \multicolumn{2}{c}{All-targets attack} \\
 \hline
& Dataset & CIFAR-10 (VGG)  &  SVHN (ResNet)  & CIFAR-10 (VGG)  & SVHN (ResNet)   \\
\midrule[1pt]
\multirow{2}{*}{Model ($t=0$)}  & Clean images &  15\% & 5.4\% & 14.2\% & 6.1\%  \\
& Triggered images &  0.07\% & 0.22\% & 12.9\%& 8.3\% \\
\midrule[1pt]
\multirow{2}{*}{Model ($t=1$)} & Clean images & 13\% & 7.7\% & 19\%  & 7.5\% \\
& Triggered images & 2\%  & 0.17\% & 13.6\% & 9.2\% \\
\bottomrule[1pt]
\end{tabular}
\end{threeparttable}
}
\end{center}
\vspace{-4mm}
\end{table}

\subsection{Mitigating and Repairing Backdoored Models}
\label{subsec_backdoor}

\textbf{Attack implementation.} We follow the procedures in \citep{BadNet_Access} to implement backdoor attacks and obtain two backdoored models trained on the same poisoned training data. The trigger pattern is placed at the right-bottom  of the poisoned images as shown in Appendix \ref{appen_backdoor}. Specifically, 10\% of the training data are poisoned by inserting the trigger and changing the original correct labels to the target label(s). Here we investigate two kinds of backdoor attacks: (a) single-target attack which sets the target label $T$ to a specific label (we choose $T=$ class 1); and (b) all-targets attack where the target label $T$ is set to  the original label $i$ plus 1 and then modulo 9, i.e., $T=i+1 (\textnormal{mod}~ 9)$.
Their performance on clean (untriggered) and triggered data samples are given in Table \ref{tab: backdoor_clean_trigger}, and the prediction errors of triggered images relative to the true labels are given in Appendix \ref{appen_true_error}.
The backdoored models have similar performance on clean data as untampered models but will indeed misclassify majority of triggered samples. Comparing to single-target attack,
all-targets attack is more difficult and  has a  higher attack failure rate, since the target labels vary with the original labels.

\textbf{Evaluation and analysis.}
We train a path connecting the two backdoored models with limited amount of bonafide data. 
As shown in Figure \ref{fig: backdoor_cifar_VGG}, at both path endpoints ($t=\{0,1\}$) the two tampered models attain low error rates on clean data but are also extremely vulnerable to backdoor attacks (low error rate on backdoored samples means high attack success rate). Nonetheless, we find that path connection with limited bonafide data can effectively mitigate backdoor attacks and redeem model power. 
For instance, the models at $t=0.1$ or $t=0.9$ can simultaneously attain similar performance on clean data as the tampered models while greatly reducing the backdoor attack success rate from close to 100\% to nearly 0\%. Moreover, most models on the path (e.g. when $t\in[0.1,0.9]$) exhibit high resilience to backdoor attacks, suggesting mode connection with limited amount of bonafide data can be an effective countermeasure. While having resilient models to backdoor attacks on the path,
we also observe that the amount of bonafide data used for training path has a larger impact on the performance of clean data. Path connection using fewer data samples will yield models with higher error rates on clean data, which is similar to the results of path connection between untampered models discussed in Section \ref{subsec_connect_untampered}. The advantages of redeeming model power using mode connectivity are consistent when evaluated on different network architectures and datasets (see Appendix \ref{appen_backdoor_more}).

\begin{figure}[t]    
\hspace{-5.5mm}
 \centering
\begin{tabular}{p{1.9in}p{1.9in}p{1.2in}}
\parbox{1.9in}{\centering  Single-target backdoor attack}
 & \parbox{1.9in}{\centering  All-targets backdoor attack}
  & \parbox{1.2in}{\centering  Legend} 
  \\
\includegraphics[align=c,width=1.9in]{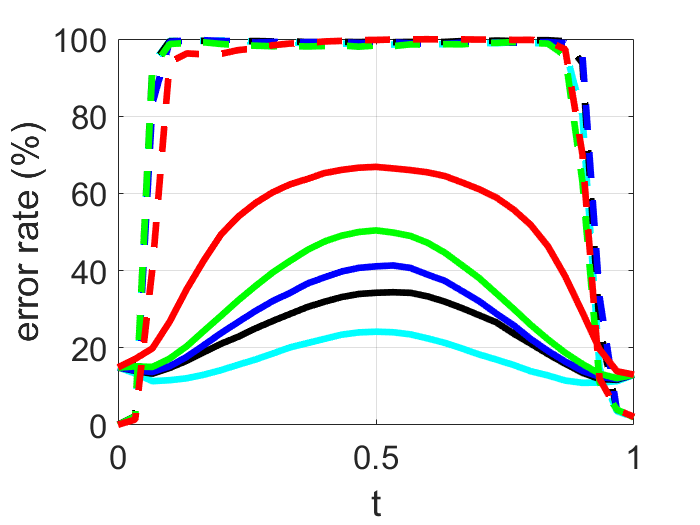}  &
\includegraphics[align=c,width=1.9in]{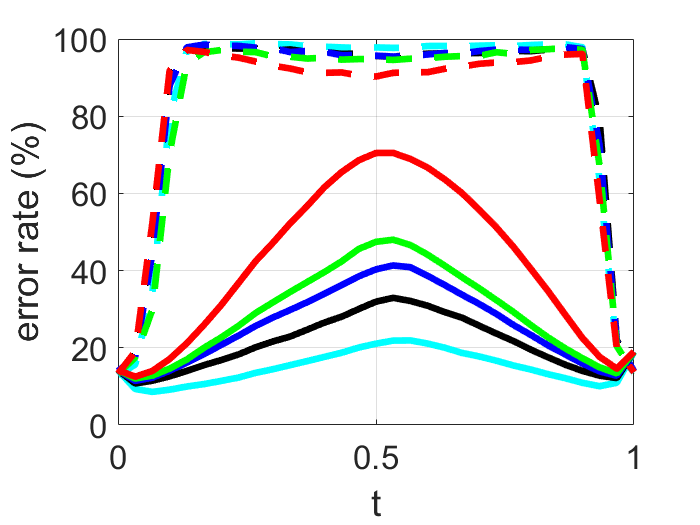} & 
\includegraphics[align=c,width=1.2in]{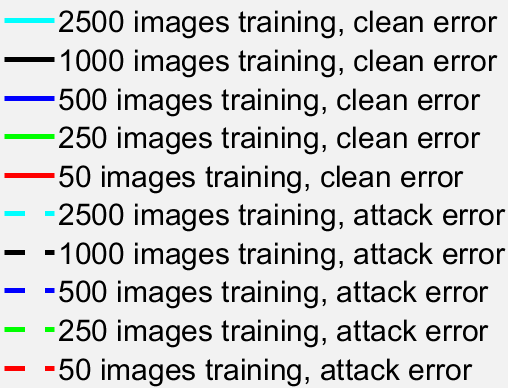} 
\end{tabular}
\vspace{-2mm}
\caption{Error rate against  backdoor attacks on the connection path for CIFAR-10 (VGG). The error rate of clean/backdoored samples means the standard-test-error/attack-failure-rate, respectively.} 
\label{fig: backdoor_cifar_VGG}
\vspace{-4mm}
\end{figure}

\begin{table}[tb]
\begin{center}
\caption{Performance against single-target backdoor attack. The clean/backdoor accuracy means standard-test-accuracy/attack-success-rate, respectively. More results are given in Appendix \ref{appen_backdoor_more}.} 
\label{tab: backdoor_comparison}
\scalebox{0.92}{
\begin{threeparttable}
\begin{tabular}{c|c|c|c|c|c|c|c}
\toprule[1pt]
& & Method / Bonafide data size & 2500 & 1000  & 500  &  250 & 50\\
\midrule[1pt]
\multirow{ 12}{*}{\makecell{CIFAR-10 \\ (VGG)}}   &\multirow{ 6}{*}{ \makecell{Clean \\  Accuracy}}&Path connection $(t=0.1)$ & 88\% &83\% & 80\% &77\% &63\%\\
& &Fine-tune &   84\% &82\% & 78\% &74\% &46\%\\
& &Train from scratch &  50\% &39\% & 31\% &30\% &20\%\\
& & Noisy model ($t=0$) &  21\% &  21\% & 21\% & 21\% & 21\%  \\
& & Noisy model ($t=1$) &  24\%&  24\%&  24\%&  24\%&  24\%  \\
& & Prune  &  88\%  &  85\% & 83\%  & 82\% &  81\% \\
\cline{2-8}
&\multirow{ 6}{*}{ \makecell{Backdoor \\ Accuracy }}&Path connection $(t=0.1)$ & 1.1\% &0.8\% & 1.5\% &3.3\% & 2.5\%\\
&  & Fine-tune &   1.5\% &0.9\% & 0.5\% & 1.9\% & 2.8\%\\
&  & Train from scratch &  0.4\% &0.7\% & 0.3\% &3.2\% &2.1\%\\
& &  Noisy model ($t=0$) &  97\%  &  97\%  &  97\%  &  97\%  &  97\% \\
& &  Noisy model ($t=1$) & 91\% & 91\% & 91\% &91\% & 91\%  \\
& & Prune  &  43\%  &  49\% &  81\% & 79\% & 82\% \\
\midrule[1pt]
\multirow{ 12}{*}{\makecell{SVHN \\ (ResNet)}}   &\multirow{ 6}{*}{ \makecell{Clean \\  Accuracy} }&Path connection $(t=0.2)$ & 96\% &  94\% & 93\% & 89\% & 82\%\\
& &Fine-tune & 96\% &  94\% & 91\% & 89\%  & 76\%\\
& &Train from scratch & 87\%  & 75\% & 61\% & 34\% &  12\% \\
& & Noisy model ($t=0$) & 13\% &  13\% &  13\% &  13\% &  13\%  \\
& & Noisy model ($t=1$) & 11\% & 11\% & 11\% & 11\% & 11\%  \\
& & Prune  &  96\%  & 95\% & 93\%  & 91\% & 89\% \\
\cline{2-8}
&\multirow{ 6}{*}{ \makecell{Backdoor \\ Accuracy }}&Path connection $(t=0.2)$ & 2.5\% & 3\% & 3.6\% & 4.3\%  & 16\% \\
&  & Fine-tune &   14\% & 7\% & 29\% & 63\% &  60\% \\
&  & Train from scratch &  3\% & 3.6\%  & 5\% & 2.2\% & 3.9\% \\
& & Noisy model ($t=0$) & 51\% & 51\% & 51\% & 51\% & 51\%\\
& & Noisy model ($t=1$) &  42\% &  42\% &  42\% &  42\% &  42\% \\
& & Prune  & 80\%  & 90\% &  88\% & 92\% &  94\% \\
\bottomrule[1pt]
\end{tabular}
\end{threeparttable}
}
\end{center}
\vspace{-4mm}
\end{table}

\textbf{Comparison with baselines.}
We compare the performance of mode connectivity against backdoor attacks with the following baseline methods: (i) fine-tuning backdoored models with bonafide data; (ii)
training a new model of the same architecture from scratch with bonafide data; (iii) model weight pruning and then fine-tuning with bonafide data using \citep{li2016pruning}; and (iv) random Gaussian perturbation to the model weights leading to a noisy model. The results are summarized in Table \ref{tab: backdoor_comparison} and their implementation details are given in Appendix \ref{appen_baseline}.
Evaluated on different network architectures and datasets, the path connection method consistently maintains superior accuracy on clean data while simultaneously attaining low attack accuracy over the baseline methods, which can be explained by the ability of finding high-accuracy paths between two models using mode connectivity.
For CIFAR-10 (VGG), even using as few as 50 bonafide samples for path connection, the subsequent model in Table  \ref{tab: backdoor_comparison} still remains 63\% clean accuracy while constraining backdoor accuracy to merely 2.5\%. The best baseline method is fine-tuning, which has similar backdoor accuracy as path connection but attains lower clean accuracy (e.g. 17\% worse than path connection when using 50 bonafide samples). For SVHN (ResNet), the clean accuracy of fine-tuning can be on par with path connection, but its backdoor accuracy is significantly higher than path connection. For example, when trained with 250 samples, they have the same clean accuracy but the backdoor accuracy of fine-tuning is 58.7\% higher than path connection.
Training from scratch does not yield competitive results given limited amount of training data. Noisy models perturbed by adding zero-mean Gaussian noises to the two models are not effective against backdoor attacks and may suffer from low clean accuracy. Pruning gives high clean accuracy but has very little effect on mitigating backdoor accuracy.

\textbf{Extensions.} Our proposal of using mode connectivity to repair backdoor models can be extended to the case when only one tampered model is given. We propose to fine-tune the model using bonafide data and then connect the given model with the fine-tuned model. Similar to the aforementioned findings, path connection can remain good accuracy on clean data while becoming resilient to backdoor attacks. We refer readers to Appendix \ref{appen_one_model} for more details.
In addition, we obtain similar conclusions when the two backdoored models are trained with different poisoned datasets.

\textcolor{black}{
\textbf{Technical Explanations.}
To provide technical explanations for the effectiveness of our proposed path connection method in repairing backdoored models, we run two sets of analysis: (i) model weight space exploration and (ii) data similarity comparison. For (i), we generate 1000 noisy versions of a backdoored model via random Gaussian weight perturbations. We find that they suffer from low clean accuracy and high attack success rate, which suggests that a good model with high-clean-accuracy and low-attack-accuracy is unlikely to be found by chance. We also report the distinct difference between noisy models and models on the path in the weight space to validate the necessity of using our path connection for attack mitigation and model repairing. More details are given in Appendix \ref{appen_noisy_model}. 
For (ii), we run similarity analysis of the input gradients between the end (backdoored) models  and  models on the connection path for both clean data and triggered data. We find that the similarity of triggered data is much lower than that of clean data when the model is further away in the path from the end models, suggesting that our path connection method can neutralize the backdoor effect. More details are given in Appendix \ref{appen_similarity}. The advantage of our path connection method over fine-tuning demonstrates the importance of using the knowledge of mode connectivity for model repairing.
}

\textcolor{black}{
\textbf{Adaptive Attack.}
To justify the robustness of our proposed path connection approach to adaptive attacks,  we consider the advanced attack setting where the attacker knows path connection is used for defense but cannot compromise the bonafide data that are private to an user. 
Furthermore, we allow the attacker to  use the \textit{same} path training loss function as the defender.
To attempt breaking path connection, the attacker trains a compromised path such that every model on this path is a backdoored model and then releases the path-aware tampered models. We show that our approach is still resilient to this adaptive attack. More details are given in Appendix \ref{appen_adaptive}.
}

\subsection{Sanitizing Error-Injected Models}
\label{subsec_injection}

\textbf{Attack implementation.} 
We adopt the fault sneaking attack \citep{pu2019fault} for injecting errors to model weights. 
Given two untampered and independently trained models, the errors are injected with selected samples as targets such that the tampered models will cause misclassification on targeted inputs and otherwise will behave as untampered models. More details are given in Appendix \ref{appen_backdoor}.

\begin{figure}[t]    
\hspace{-5.5mm}
 \centering
\begin{tabular}{p{1.9in}p{1.9in}p{1.2in}}
\parbox{1.9in}{\centering CIFAR-10 (VGG)}
 & \parbox{1.9in}{\centering  SVHN (ResNet)}
  & \parbox{1.2in}{\centering  Legend}
  \\
\includegraphics[align=c,width=1.9in]{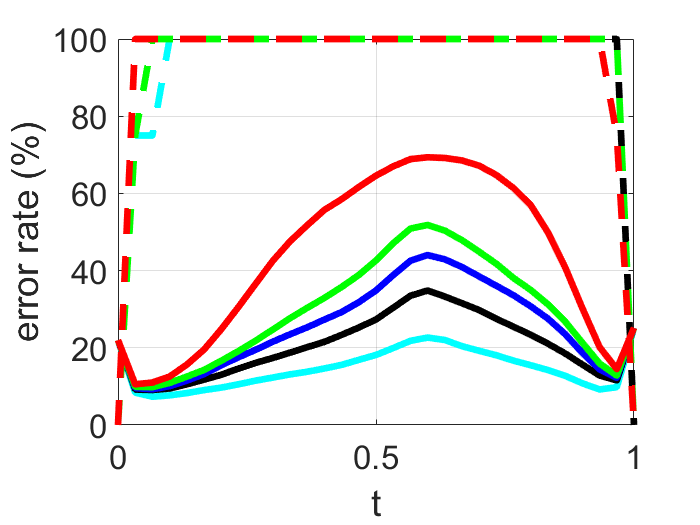}  &
\includegraphics[align=c,width=1.9in]{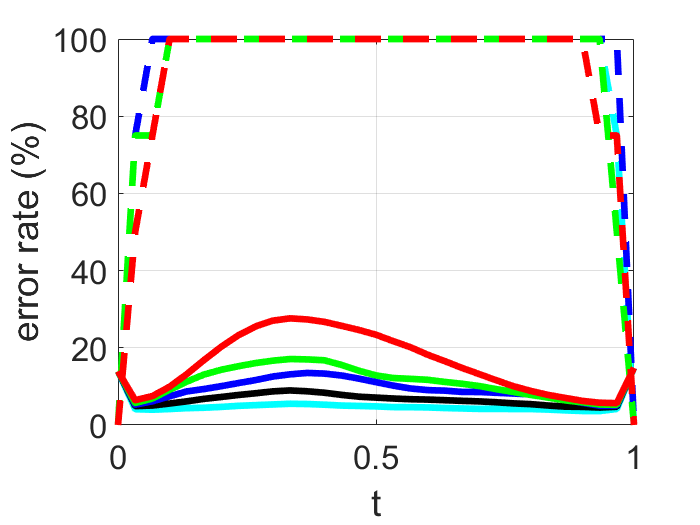} & 
\includegraphics[align=c,width=1.35in]{figs_new/legend.png} 
\end{tabular}
\vspace{-2mm}
\caption{Error rate against error-injection attack on the connection path for CIFAR-10 (VGG). 
The error rate of clean/targeted samples means standard-test-error/attack-failure-rate, respectively.} 
\label{fig: injection_cifar_VGG}
\end{figure}

\begin{table}[tb]
\begin{center}
\caption{Performance against error-injection attack. The clean/injection accuracy means standard-test-accuracy/attack-success-rate, respectively.
Path connection has the best clean accuracy and can
completely remove injected errors  (i.e. 0\% attack accuracy).  More results are given in  Appendix \ref{appen_injection_more}.}
\label{tab: injection_comparison}
\scalebox{0.8}{
\begin{threeparttable}
\begin{tabular}{c|c|c|c|c|c|c|c|c|c|c|c}
\toprule[1pt]
&  &  \multicolumn{5}{c|}{Clean Accuracy} &  \multicolumn{5}{c}{Injection Accuracy} \\
\hline
Dataset & Method / Bonafide data size & 2500 & 1000  & 500  &  250 & 50 & 2500 & 1000  & 500  &  250 & 50\\
\hline
\multirow{6}{*}{\makecell{CIFAR-10 \\ (VGG)}}   & Path connection $(t=0.1)$ &  92\% & 90\% & 90\% & 90\% & 88\%  &0\% &0\%&0\%&0\%&0\% \\
& Fine-tune &  88\% & 87\% & 86\% & 84\% & 82\% &0\%&0\%&0\%&0\%&0\%\\
& Train from scratch &  45\% & 37\% & 27\% & 25\% & 10\% &0\%&0\%&0\%&0\%&0\%\\
& Noisy model ($t=0$) & 14\% & 14\% & 14\% & 14\% & 14\% & 36\%& 36\%& 36\%& 36\%& 36\%\\
&  Noisy model ($t=1$) &  12\% &  12\%&  12\%&  12\%&  12\% &19\%&19\%&19\%&19\%&19\%\\
&  Prune  & 91\% & 89\% & 88\%  & 88\% & 88\% &0\%&0\%&25\%&25\%&25\% \\
\hline
\multirow{6}{*}{\makecell{SVHN \\ (ResNet)}}   & Path connection $(t=0.1)$ & 96\%  & 94\%  & 92\% & 91\%  & 90\%  &0\% &0\% &0\% &0\% &0\% \\
& Fine-tune &  94\% & 93\% & 91\% & 89\%  & 88\% &0\% &25\% &0\% &25\%&25\% \\
& Train from scratch &  90\%  &  83\%  & 75\% &  61\%  & 21\%  &0\% &0\% &0\% &0\% &0\% \\
& Noisy model ($t=0$) &  11\%  &  11\% &  11\% &  11\% &  11\% &28\% &28\% &28\% &28\% &28\% \\
&  Noisy model ($t=1$) &  11\%  &  11\% &  11\% &  11\% &  11\% &18\%&18\%&18\%&18\%&18\%\\
&  Prune & 95\% & 93\% & 92\% &  90\%  & 89\% & 0\%& 0\%& 25\%& 0\%& 25\% \\
\bottomrule[1pt]
\end{tabular}
\end{threeparttable}
}
\end{center}
\vspace{-4mm}
\end{table}

\textbf{Evaluation and analysis.}
Similar to the setup in Section \ref{subsec_backdoor}, Figure \ref{fig: injection_cifar_VGG} shows the clean accuracy and attack accuracy of the models on the path connecting two error-injected models using limited amount of bonafide data. For the error-injected models ($t=\{0,1\}$), the attack accuracy is nearly 100\%, which corresponds to 0\% attack failure rate on targeted samples. However, using path connection and limited amount of bonafide data, the injected errors can be removed almost completely. Varying the size of path training data consistently sanitizes the error-injected models and
mainly affects the standard test error.
Most of the models on the path can attain nearly 100\% fault tolerance (i.e. 100\% attack failure rate) to the injected errors. The models on the path near $t=0$ or $t=1$ have comparable performance on clean data and exhibit strong fault tolerance to injected errors.
Similar findings are observed across different network architectures and datasets (see Appendix \ref{appen_injection_more}).



\textbf{Comparison with baselines and extensions.}
 In Table \ref{tab: injection_comparison}, we adopt the same baselines as in Section \ref{subsec_backdoor} to compare with path connection. We find that only path connection and training-from-scratch can successfully sanitize the error-injected models and attain
0\% attack accuracy, and other baselines are less effective.
Table \ref{tab: injection_comparison} also shows the clean accuracy of path connection is substantially better than the effective baselines, suggesting novel applications of mode connectivity for finding accurate and adversarially robust models. The extensions to other settings are discussed in Appendix \ref{appen_one_model}. 

\textcolor{black}{
\textbf{Technical explanations and adaptive attack.} Consistent with the results in backdoor attacks, we explore the model weight space to demonstrate the significant difference between the models on our connection path and random noisy models. We also show that the similarity of error-injected images are much lower than that of clean images. In addition, our path connection is resilient to the advanced path-aware error-injection attack. More details are given in Appendices \ref{appen_noisy_model}, \ref{appen_similarity} and \ref{appen_adaptive}.
}


\subsection{Robustness Loss Landscape and 
Correlation Analysis for Evasion Attack}
To gain insights on mode connectivity against evasion attacks, here we investigate the standard and adversarial-robustness loss landscapes on the same path connecting two untampered and independently trained models. The path is trained using the entire training dataset for minimizing \eqref{eqn_path_loss} with standard cross entropy loss. The robustness loss refers to the cross entropy of class predictions on adversarial examples generated by evasion attacks and their original class labels. Higher robustness loss suggests the model is more vulnerable to evasion attacks. In addition, we will investigate the robustness loss landscape connecting regular (non-robust) and adversarially-trained (robust) models, where the path is also trained with standard cross entropy loss. We will also study the behavior of the largest eigenvalue of the Hessian matrix associated with the cross entropy loss and the data input, which we call the input Hessian. As adversarial examples are often generated by using the input gradients, we believe the largest eigenvalue of the input Hessian can offer new insights on robustness loss landscape, similar to the role of model-weight Hessian on quantifying generalization performance \citep{wu2017towards,wang2018identifying}.

\begin{figure}[t]    
\hspace{-2mm}
 \centering
\begin{tabular}{p{0.1in}p{1.25in}p{1.25in}p{1.25in}p{0.85in}}
 & \parbox{1.25in}{\centering \footnotesize Connection of regular models (PCC=0.71)} &  
\parbox{1.25in}{\centering \footnotesize  Connection of regular and adversarially-trained models (PCC=0.88)}  &  
\parbox{1.25in}{\centering \footnotesize Connection of adversarially-trained models (PCC=0.86) }  & 
\parbox{0.85in}{\centering \footnotesize Legend  } \\
\vspace{-0.4in} \rotatebox{90}{\parbox{0.6in}{\centering \footnotesize loss \& eigenvalue }}  &  \includegraphics[align=c,width=1.25in]{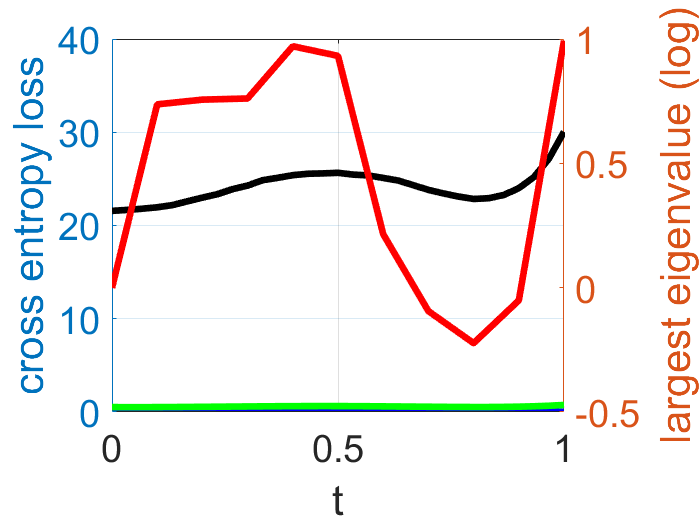} &
\includegraphics[align=c,width=1.25in]{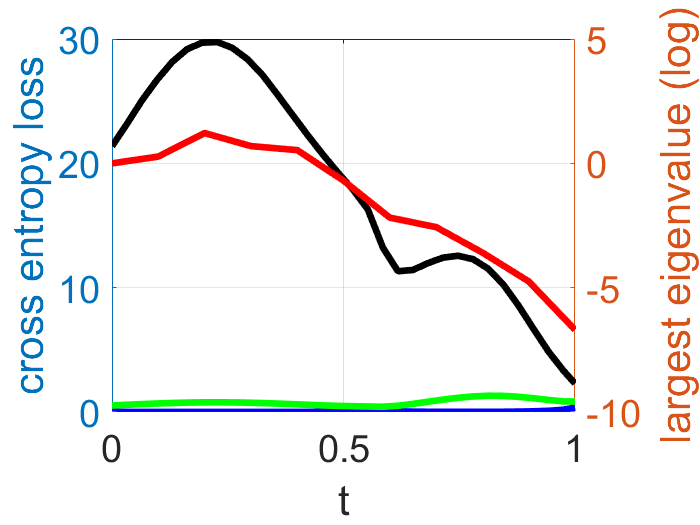} & 
\includegraphics[align=c,width=1.25in]{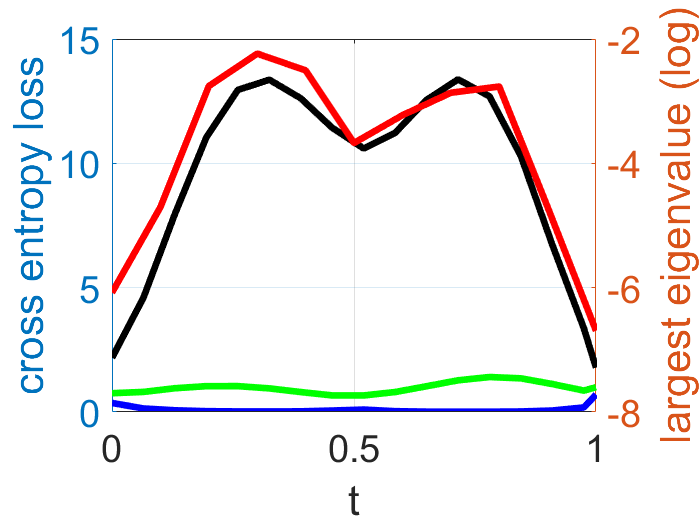} &\includegraphics[align=c,width=0.9in]{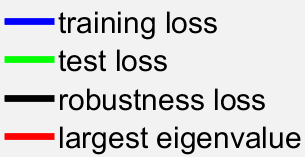} 
\\ 
\vspace{-0.5in} \rotatebox{90}{\parbox{0.9in}{\centering \footnotesize error rate \& \\ attack success rate }} & \includegraphics[align=c,width=1.25in]{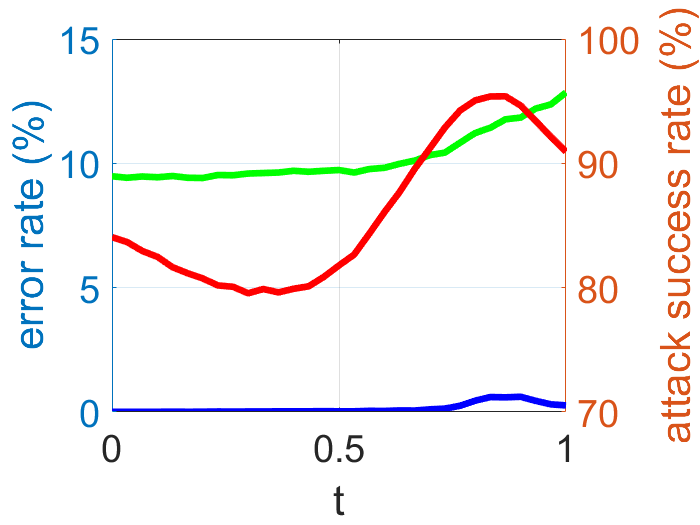} &
\includegraphics[align=c,width=1.25in]{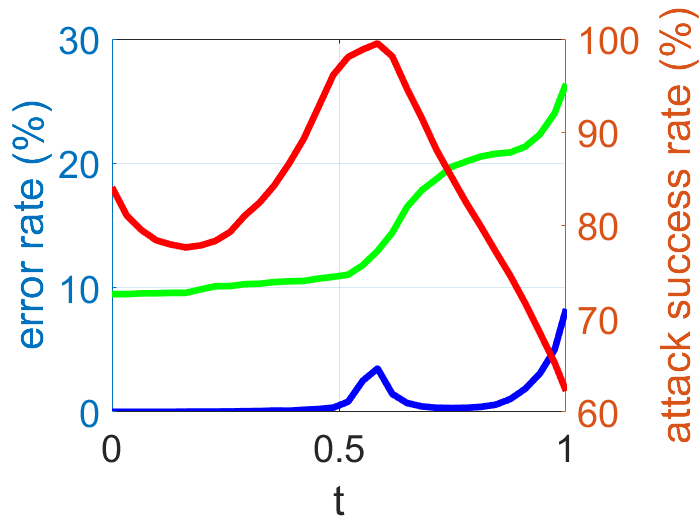} & 
\includegraphics[align=c,width=1.25in]{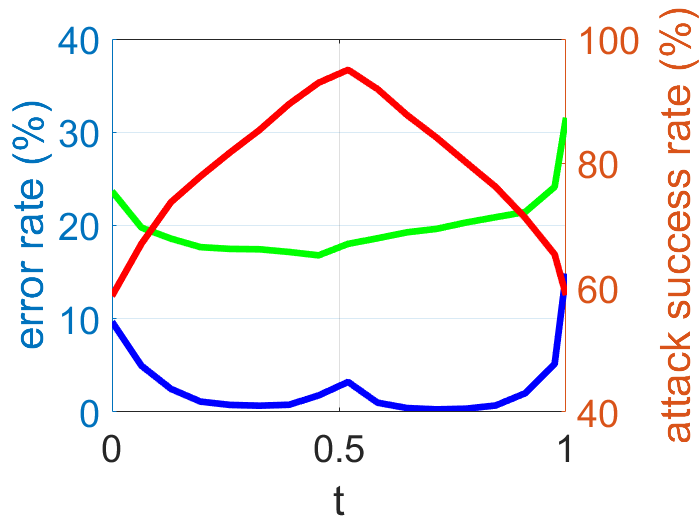}
&\includegraphics[align=c,width=0.9in]{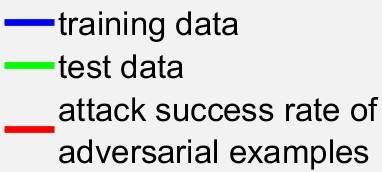}
\end{tabular}
\vspace{-2mm}
\caption{Loss, error rate, attack success rate and largest eigenvalue of input Hessian on the path connecting different model pairs on CIFAR-10 (VGG) using standard loss. The error rate of training/test data means standard training/test error, respectively. In all cases, there is no standard loss barrier but a robustness loss barrier.
There is also a high correlation between the robustness loss and the largest eigenvalue of input Hessian, and their Pearson correlation coefficient (PCC) is reported in the title.} 
\label{fig: input_hessian_vs_adversarial_loss}
\vspace{-4mm}
\end{figure}

\textbf{Attack Implementation.}
We uniformly select 9 models $(t=\{0.1,0.2,\ldots,0.9\})$ on the path and run evasion attacks on each of them using the $\ell_\infty$-norm-ball based projected gradient descent (PGD) method proposed in \citep{madry2017towards}.  The robustness loss is evaluated using the non-targeted adversarial examples crafted from the entire test set, and the attack perturbation strength is set to $\epsilon=8/255$ with 10 iterations.
We also use the PGD method for adversarial training to obtain adversarially-trained models that are robust to evasion attacks but pay the price of reduced accuracy on clean data \citep{madry2017towards}.

\textbf{Evaluation and Analysis.} To study the standard and robustness loss landscapes, we scrutinize the models on the path connecting the following pairs of models: (i) independently trained regular (non-robust) models; (ii) regular to adversarially-trained models; and (iii) independently adversarially-trained models. These results are shown in Figure \ref{fig: input_hessian_vs_adversarial_loss}. We summarize the major findings as follows.
\begin{itemize}[leftmargin=*]
\item \textsf{No standard loss barrier in all cases:} Regardless of the model pairs, all models on the paths have similar standard loss metrics in terms of training and test losses, which are consistent with the previous results on the ``flat'' standard loss landscape for mode connectivity \citep{garipov2018loss,gotmare2018using,draxler2018essentially}. The curve of standard loss in case (ii) is skewed toward one end due to the artifact that the adversarially-trained model ($t=1$) has a higher training/test error than the regular model ($t=0$). 

\item \textsf{Robustness loss barrier:} Unlike standard loss, we find that the robustness loss on the connection path has a very distinct characteristic. In all cases, there is a robustness loss barrier (a hill) between pairs of regular and adversarially-trained models. The gap (height) of the robustness loss barrier is more apparent in cases (ii) and (iii).
For (ii), the existence of a barrier suggests the modes of regular and adversarially-trained models are not connected by the path in terms of robustness loss, which also provides a geometrical evidence of the ``no free lunch'' hypothesis that adversarially robust models cannot be obtained without additional costs \citep{tsipras2019robustness,dohmatob2018limitations,bubeck2018adversarial}. For (iii), robustness loss barriers also exist. The models on the path are less robust than the two adversarially-trained models at the path end, despite they have similar standard losses. The results suggest that there are essentially no better adversarially robust models on the path connected by  regular training using standard loss.

\item \textsf{High correlation between the largest eigenvalue of input Hessian and robustness loss:} Inspecting the largest eigenvalue of input Hessian $H_t(x)$ of a data input $x$ on the path, denoted by $\lambda_{\max}(t)$,  we observe a strong accordance between $\lambda_{\max}(t)$ and robustness loss on the path, verified by the high empirical Pearson correlation coefficient (PCC) averaged over the entire test set as reported in Figure \ref{fig: input_hessian_vs_adversarial_loss}. As evasion attacks often use input gradients to craft adversarial perturbations to $x$, the eigenspectrum of input Hessian  indicates its local loss curvature and relates to adversarial robustness \citep{yu2018interpreting}. The details of computing $\lambda_{\max}(t)$ are given in Appendix \ref{appen_evasion}. Below we provide technical explanations for the empirically observed high correlation between $\lambda_{\max}(t)$ and the oracle robustness loss on the path, defined as $\max_{\|\delta\|\leq \epsilon}  l(w(t),x+\delta)$.
\begin{proposition}
\label{prop_correlation}
Let $f_{w}(\cdot)$ be a neural network classifier with its model weights denoted by $w$ and let $l(w,x)$ denote the classification loss (e.g. cross entropy of $f_w(x)$ and the true label $y$ of a data sample $x$). Consider the oracle robustness loss  $\max_{\|\delta\| \leq \epsilon}\ell(w(t),x+\delta)$ of the model $t$ on the path, where $\delta$ denotes a perturbation to $x$ confined by an $\epsilon$-ball induced by a vector norm $\|\cdot\|$. 
Assume \\
     (a) the standard loss $l(w(t),x)$ on the path is a constant for all $t \in [0,1]$. \\
     (b) $l(w(t),x+\delta) \approx l(w(t),x) + \nabla_x l(w(t),x)^T \delta + \frac{1}{2} \delta^T H_t(x) \delta$ for small $\delta$, where $\nabla_x l(w(t),x)$ is the input gradient and $H_t(x)$ is the input Hessian of $l(w(t),x)$ at $x$. \\
     Let $c$ denote the normalized inner product in absolute value for the largest eigenvector $v$ of $H_t(x)$ and $\nabla_x l(w(t),x)$, $\frac{|\nabla_x l(w(t),x)^T v|}{\|\nabla_x l(w(t),x)\|} = c$. 
Then we have $\max_{\|\delta\|\leq \epsilon}  l(w(t),x+\delta) \sim \lambda_{\max}(t)$ as $c \rightarrow 1$.
\end{proposition}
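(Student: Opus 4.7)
The plan is to reduce the robustness-loss maximization to a one-dimensional problem along the leading eigendirection of $H_t(x)$, eliminate the $t$-independent offset using assumption (a), and show that as the alignment $c \to 1$ the oracle robustness loss collapses to an affine function of $\lambda_{\max}(t)$. I would work in the $\ell_2$ norm throughout, which is the setting for which the top eigenvector of $H_t(x)$ is the geometrically relevant direction.

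First, substituting the Taylor expansion from (b) into the oracle robustness loss and pulling out the constant term yields
\[
\max_{\|\delta\|\leq\epsilon} l(w(t),x+\delta) \;\approx\; l(w(t),x) \;+\; \max_{\|\delta\|\leq\epsilon}\Bigl[\nabla_x l(w(t),x)^T \delta + \tfrac{1}{2}\delta^T H_t(x)\delta\Bigr].
\]
Let $v$ be a unit top eigenvector of $H_t(x)$ and set $\sigma = \sign(\nabla_x l(w(t),x)^T v)$. To produce a lower bound I would evaluate the bracketed expression at the explicit candidate $\delta^{\star}=\sigma\epsilon v$; using $H_t(x)v=\lambda_{\max}(t)v$ together with the alignment identity $|\nabla_x l(w(t),x)^T v|=c\,\|\nabla_x l(w(t),x)\|$, this gives $\epsilon c\,\|\nabla_x l(w(t),x)\| + \tfrac{1}{2}\epsilon^{2}\lambda_{\max}(t)$. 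A matching upper bound of $\epsilon\,\|\nabla_x l(w(t),x)\|+\tfrac{1}{2}\epsilon^{2}\lambda_{\max}(t)$ then follows from Cauchy--Schwarz on the linear term and the Rayleigh-quotient bound on the quadratic term. These bounds squeeze together as $c\to 1$, $\delta^{\star}$ becomes an approximate maximizer in the limit, and assumption (a) removes $l(w(t),x)$ from the $t$-dependence, leaving the oracle robustness loss as an affine function of $\lambda_{\max}(t)$ with slope $\epsilon^{2}/2$ (up to the gradient-norm term discussed next).

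The main obstacle I foresee is the residual linear contribution $\epsilon\,\|\nabla_x l(w(t),x)\|$, which is not manifestly proportional to $\lambda_{\max}(t)$ and could in principle vary with $t$ independently of the Hessian spectrum. I plan to handle this in two complementary ways. First, in the limit $c\to 1$ the gradient direction aligns (up to sign) with $v$, so $\nabla_x l(w(t),x)$ itself becomes an eigenvector of $H_t(x)$ with eigenvalue $\lambda_{\max}(t)$, spectrally linking the two quantities along the path. Second, in the perturbation regime used in the experiments ($\epsilon=8/255$) the quadratic term $\tfrac{1}{2}\epsilon^{2}\lambda_{\max}(t)$ dominates the $t$-varying contribution, so the ``$\sim$'' in the statement should be read as ``varies proportionally to'' in the sense of a strong linear relationship rather than exact asymptotic equality. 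This captures the qualitative conclusion intended by the proposition and matches the high empirical Pearson correlation coefficients reported in Figure~\ref{fig: input_hessian_vs_adversarial_loss}.
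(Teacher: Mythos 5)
Your reduction of the inner maximization to a squeeze between $\epsilon c\,\|\nabla_x l(w(t),x)\|+\tfrac{1}{2}\epsilon^2\lambda_{\max}(t)$ and $\epsilon\,\|\nabla_x l(w(t),x)\|+\tfrac{1}{2}\epsilon^2\lambda_{\max}(t)$ is sound, and it is essentially the same Rayleigh-quotient squeeze the paper uses (the paper phrases it as bounding $g^T H_t(x) g/\|g\|^2$ with $g=\nabla_x l(w(t),x)$ between $(2c-1)\lambda_{\max}+h_x$ and $\lambda_{\max}$). The genuine gap is exactly the term you flag and then fail to neutralize: $\epsilon\,\|\nabla_x l(w(t),x)\|$. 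Neither of your two workarounds closes it. Alignment of the gradient with the top eigenvector as $c\to 1$ constrains only the \emph{direction} of $\nabla_x l(w(t),x)$, not its norm, so it says nothing about how $\|\nabla_x l(w(t),x)\|$ varies with $t$. And your dominance argument runs the wrong way: for a small budget such as $\epsilon=8/255$, the linear term $\epsilon\,\|\nabla_x l\|$ generically dominates the quadratic term $\tfrac{1}{2}\epsilon^{2}\lambda_{\max}$, not the reverse, so it cannot be discarded as subdominant. As written, your argument leaves an uncontrolled $t$-dependent contribution that could swamp the $\lambda_{\max}(t)$ dependence entirely.

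The paper closes this gap with a short lemma you are missing. Assumption (a) is read as holding pointwise in $x$ (the standard loss landscape is flat along the path for every input), so $l(w(t),x)=l(w(t'),x)$ as functions of $x$; differentiating this identity in $x$ shows that $\nabla_x l(w(t),x)$, and hence $\|\nabla_x l(w(t),x)\|$, does not vary with $t$. The linear term is therefore a $t$-independent additive offset, and the only $t$-varying part of the oracle robustness loss is $\tfrac{\epsilon^2}{2}$ times the Rayleigh quotient, which your squeeze already forces to $\lambda_{\max}(t)$ as $c\to 1$. (The paper also organizes the computation slightly differently---it compares two nearby models $w(t)$ and $w(t+\Delta t)$ via the first-order PGD ansatz $\delta^*=\epsilon\,\nabla_x l/\|\nabla_x l\|$ and shows the \emph{difference} of their robustness losses reduces to $\tfrac{\epsilon^2}{2}$ times the difference of Rayleigh quotients---but that is cosmetic; the substantive ingredient missing from your write-up is the constancy of the input-gradient norm along the path.)
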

\textbf{Proof:}
The proof is given in Appendix \ref{appen_prop_correlation}.
Assumption (a) follows by the existence of high-accuracy path of standard loss landscape from mode connectivity analysis. Assumption (b) assumes the local landscape with respect to the input $x$ can be well captured by its
second-order curvature based on Taylor expansion. The value of $c$ is usually quite large, which has been empirically verified in both regular and adversarially-trained models \citep{moosavi2018robustness}.
\end{itemize}

\textbf{Extensions.} Although we find that there is a robustness loss barrier on the path connected by regular training, we conduct additional experiments to show that it is possible to find an robust path connecting two adversarially-trained or regularly-trained model pairs using adversarial training \citep{madry2017towards}, which we call the ``robust connection'' method.  However, model ensembling using either the regular connection or robust connection has little gain against evasion attacks, as the adversarial examples are known to transfer between similar models \citep{papernot2016transferability,su2018robustness}. We refer readers to Appendix \ref{appen_robust_conn} for more details.

\section{Conclusion}
This paper provides novel insights on adversarial robustness of deep neural networks through the lens of mode connectivity in loss landscapes. Leveraging mode connectivity between model optima, we show that path connection trained by a limited number of clean data can successfully repair backdoored or error-injected models and significantly outperforms several baseline methods. Moreover, we use mode connectivity to uncover the existence of robustness loss barrier on the path trained by standard loss against evasion attacks. We also provide technical explanations for the effectiveness of our proposed approach and theoretically justify
the empirically observed high correlation between robustness loss and the largest eigenvalue of input Hessian. Our findings are consistent and validated on different network architectures and datasets.

\section*{Acknowledgements}
This work was primarily conducted during Pu Zhao's internship at IBM Research.
This work is partly supported by the National Science Foundation CNS-1929300.

\bibliography{adversarial_learning}

\begin{thebibliography}{46}
\providecommand{\natexlab}[1]{#1}
\providecommand{\url}[1]{\texttt{#1}}
\expandafter\ifx\csname urlstyle\endcsname\relax
  \providecommand{\doi}[1]{doi: #1}\else
  \providecommand{\doi}{doi: \begingroup \urlstyle{rm}\Url}\fi

\bibitem[Barenghi et~al.(2012)Barenghi, Breveglieri, and et.
  al.]{barenghi2012fault}
A.~Barenghi, L.~Breveglieri, and et. al.
\newblock Fault injection attacks on cryptographic devices: Theory, practice,
  and countermeasures.
\newblock \emph{Proceedings of the IEEE}, 2012.
\newblock ISSN 0018-9219.

\bibitem[Biggio \& Roli(2018)Biggio and Roli]{biggio2018wild}
Battista Biggio and Fabio Roli.
\newblock Wild patterns: Ten years after the rise of adversarial machine
  learning.
\newblock \emph{Pattern Recognition}, 84:\penalty0 317--331, 2018.

\bibitem[Biggio et~al.(2012)Biggio, Nelson, and Laskov]{Biggio2012poison}
Battista Biggio, Blaine Nelson, and Pavel Laskov.
\newblock Poisoning attacks against support vector machines.
\newblock In \emph{International Coference on International Conference on
  Machine Learning}, pp.\  1467--1474, 2012.

\bibitem[Bubeck et~al.(2019)Bubeck, Price, and
  Razenshteyn]{bubeck2018adversarial}
S{\'e}bastien Bubeck, Eric Price, and Ilya Razenshteyn.
\newblock Adversarial examples from computational constraints.
\newblock \emph{International Coference on International Conference on Machine
  Learning}, 2019.

\bibitem[Carlini \& Wagner(2017)Carlini and Wagner]{carlini2017towards}
Nicholas Carlini and David Wagner.
\newblock Towards evaluating the robustness of neural networks.
\newblock In \emph{IEEE Symposium on Security and Privacy}, pp.\  39--57, 2017.

\bibitem[Chen et~al.(2018)Chen, Carvalho, Baracaldo, Ludwig, Edwards, Lee,
  Molloy, and Srivastava]{chen2018detecting}
Bryant Chen, Wilka Carvalho, Nathalie Baracaldo, Heiko Ludwig, Benjamin
  Edwards, Taesung Lee, Ian Molloy, and Biplav Srivastava.
\newblock Detecting backdoor attacks on deep neural networks by activation
  clustering.
\newblock \emph{arXiv preprint arXiv:1811.03728}, 2018.

\bibitem[Chen et~al.(2017)Chen, Zhang, Sharma, Yi, and Hsieh]{chen2017zoo}
Pin-Yu Chen, Huan Zhang, Yash Sharma, Jinfeng Yi, and Cho-Jui Hsieh.
\newblock {ZOO}: Zeroth order optimization based black-box attacks to deep
  neural networks without training substitute models.
\newblock In \emph{ACM Workshop on Artificial Intelligence and Security}, pp.\
  15--26, 2017.

\bibitem[Dohmatob(2018)]{dohmatob2018limitations}
Elvis Dohmatob.
\newblock Limitations of adversarial robustness: strong no free lunch theorem.
\newblock \emph{International Conference on International Conference on Machine
  Learning}, 2018.

\bibitem[Draxler et~al.(2018)Draxler, Veschgini, Salmhofer, and
  Hamprecht]{draxler2018essentially}
Felix Draxler, Kambis Veschgini, Manfred Salmhofer, and Fred Hamprecht.
\newblock Essentially no barriers in neural network energy landscape.
\newblock In \emph{International Conference on Machine Learning}, volume~80,
  pp.\  1309--1318, Jul 2018.

\bibitem[Farouki(2012)]{rida2012bernstein}
Rida~T. Farouki.
\newblock The bernstein polynomial basis: A centennial retrospective.
\newblock \emph{Computer Aided Geometric Design}, 29\penalty0 (6):\penalty0 379
  -- 419, 2012.
\newblock ISSN 0167-8396.
\newblock \doi{https://doi.org/10.1016/j.cagd.2012.03.001}.
\newblock URL
  \url{http://www.sciencedirect.com/science/article/pii/S0167839612000192}.

\bibitem[Fawzi et~al.(2017)Fawzi, Moosavi-Dezfooli, and
  Frossard]{fawzi2017robustness}
Alhussein Fawzi, Seyed-Mohsen Moosavi-Dezfooli, and Pascal Frossard.
\newblock The robustness of deep networks: A geometrical perspective.
\newblock \emph{IEEE Signal Processing Magazine}, 34\penalty0 (6):\penalty0
  50--62, 2017.

\bibitem[Fawzi et~al.(2018)Fawzi, Moosavi-Dezfooli, Frossard, and
  Soatto]{fawzi2018empirical}
Alhussein Fawzi, Seyed-Mohsen Moosavi-Dezfooli, Pascal Frossard, and Stefano
  Soatto.
\newblock Empirical study of the topology and geometry of deep networks.
\newblock In \emph{IEEE Conference on Computer Vision and Pattern Recognition},
  pp.\  3762--3770, 2018.

\bibitem[Garipov et~al.(2018)Garipov, Izmailov, Podoprikhin, Vetrov, and
  Wilson]{garipov2018loss}
Timur Garipov, Pavel Izmailov, Dmitrii Podoprikhin, Dmitry~P Vetrov, and
  Andrew~G Wilson.
\newblock Loss surfaces, mode connectivity, and fast ensembling of {DNNs}.
\newblock In \emph{Advances in Neural Information Processing Systems}, pp.\
  8789--8798, 2018.

\bibitem[Gomes et~al.(2012)Gomes, Velho, and Sousa]{gomes2012computer}
Jonas Gomes, Luiz Velho, and Mario~Costa Sousa.
\newblock \emph{Computer Graphics: Theory and Practice}.
\newblock A. K. Peters, Ltd., Natick, MA, USA, 1st edition, 2012.
\newblock ISBN 1568815808, 9781568815800.

\bibitem[Goodfellow et~al.(2015)Goodfellow, Shlens, and
  Szegedy]{goodfellow2014explaining}
Ian~J Goodfellow, Jonathon Shlens, and Christian Szegedy.
\newblock Explaining and harnessing adversarial examples.
\newblock \emph{International Conference on Learning Representations}, 2015.

\bibitem[Gotmare et~al.(2018)Gotmare, Keskar, Xiong, and
  Socher]{gotmare2018using}
Akhilesh Gotmare, Nitish~Shirish Keskar, Caiming Xiong, and Richard Socher.
\newblock Using mode connectivity for loss landscape analysis.
\newblock \emph{arXiv preprint arXiv:1806.06977}, 2018.

\bibitem[{Gu} et~al.(2019){Gu}, {Liu}, {Dolan-Gavitt}, and
  {Garg}]{BadNet_Access}
T.~{Gu}, K.~{Liu}, B.~{Dolan-Gavitt}, and S.~{Garg}.
\newblock {BadNets}: Evaluating backdooring attacks on deep neural networks.
\newblock \emph{IEEE Access}, 7:\penalty0 47230--47244, 2019.

\bibitem[He et~al.(2016)He, Zhang, Ren, and Sun]{he2016identity}
Kaiminaripov He, Xiangyu Zhang, Shaoqing Ren, and Jian Sun.
\newblock Identity mappings in deep residual networks.
\newblock In \emph{European conference on computer vision}, pp.\  630--645.
  Springer, 2016.

\bibitem[Ilyas et~al.(2018)Ilyas, Engstrom, Athalye, and Lin]{ilyas2018black}
Andrew Ilyas, Logan Engstrom, Anish Athalye, and Jessy Lin.
\newblock Black-box adversarial attacks with limited queries and information.
\newblock \emph{International Coference on International Conference on Machine
  Learning}, 2018.

\bibitem[Jagielski et~al.(2018)Jagielski, Oprea, Biggio, Liu, Nita-Rotaru, and
  Li]{jagielski2018manipulating}
Matthew Jagielski, Alina Oprea, Battista Biggio, Chang Liu, Cristina
  Nita-Rotaru, and Bo~Li.
\newblock Manipulating machine learning: Poisoning attacks and countermeasures
  for regression learning.
\newblock In \emph{IEEE Symposium on Security and Privacy (SP)}, pp.\  19--35,
  2018.

\bibitem[Li et~al.(2017)Li, Kadav, Durdanovic, Samet, and Graf]{li2016pruning}
Hao Li, Asim Kadav, Igor Durdanovic, Hanan Samet, and Hans~Peter Graf.
\newblock Pruning filters for efficient convnets.
\newblock \emph{International Conference on Learning Representations}, 2017.

\bibitem[Liu et~al.(2017)Liu, Wei, Luo, and Xu]{liu2017fault}
Y.~Liu, L.~Wei, B.~Luo, and Q.~Xu.
\newblock Fault injection attack on deep neural network.
\newblock In \emph{2017 IEEE/ACM ICCAD}, pp.\  131--138, Nov 2017.
\newblock \doi{10.1109/ICCAD.2017.8203770}.

\bibitem[Liu et~al.(2018)Liu, Ma, Aafer, Lee, Zhai, Wang, and
  Zhang]{liu2017trojaning}
Yingqi Liu, Shiqing Ma, Yousra Aafer, Wen-Chuan Lee, Juan Zhai, Weihang Wang,
  and Xiangyu Zhang.
\newblock Trojaning attack on neural networks.
\newblock In \emph{Network and Distributed System Security Symposium (NDSS)},
  2018.

\bibitem[Madry et~al.(2018)Madry, Makelov, Schmidt, Tsipras, and
  Vladu]{madry2017towards}
Aleksander Madry, Aleksandar Makelov, Ludwig Schmidt, Dimitris Tsipras, and
  Adrian Vladu.
\newblock Towards deep learning models resistant to adversarial attacks.
\newblock \emph{International Conference on Learning Representations}, 2018.

\bibitem[Martens \& Sutskever(2012)Martens and Sutskever]{martens2012training}
James Martens and Ilya Sutskever.
\newblock Training deep and recurrent networks with hessian-free optimization.
\newblock In \emph{Neural networks: Tricks of the trade}, pp.\  479--535.
  Springer, 2012.

\bibitem[Moosavi-Dezfooli et~al.(2019)Moosavi-Dezfooli, Fawzi, Uesato, and
  Frossard]{moosavi2018robustness}
Seyed-Mohsen Moosavi-Dezfooli, Alhussein Fawzi, Jonathan Uesato, and Pascal
  Frossard.
\newblock Robustness via curvature regularization, and vice versa.
\newblock \emph{IEEE Conference on Computer Vision and Pattern Recognition},
  2019.

\bibitem[Papernot et~al.(2016)Papernot, McDaniel, and
  Goodfellow]{papernot2016transferability}
Nicolas Papernot, Patrick McDaniel, and Ian Goodfellow.
\newblock Transferability in machine learning: from phenomena to black-box
  attacks using adversarial samples.
\newblock \emph{arXiv preprint arXiv:1605.07277}, 2016.

\bibitem[Papernot et~al.(2017)Papernot, McDaniel, Goodfellow, Jha, Celik, and
  Swami]{papernot2017practical}
Nicolas Papernot, Patrick McDaniel, Ian Goodfellow, Somesh Jha, Z~Berkay Celik,
  and Ananthram Swami.
\newblock Practical black-box attacks against machine learning.
\newblock In \emph{ACM Asia Conference on Computer and Communications
  Security}, pp.\  506--519, 2017.

\bibitem[Shafahi et~al.(2018)Shafahi, Huang, Najibi, Suciu, Studer, Dumitras,
  and Goldstein]{shafahi2018poison}
Ali Shafahi, W~Ronny Huang, Mahyar Najibi, Octavian Suciu, Christoph Studer,
  Tudor Dumitras, and Tom Goldstein.
\newblock Poison frogs! targeted clean-label poisoning attacks on neural
  networks.
\newblock In \emph{Advances in Neural Information Processing Systems}, pp.\
  6103--6113, 2018.

\bibitem[Simonyan \& Zisserman(2014)Simonyan and Zisserman]{simonyan2014very}
Karen Simonyan and Andrew Zisserman.
\newblock Very deep convolutional networks for large-scale image recognition.
\newblock \emph{arXiv preprint arXiv:1409.1556}, 2014.

\bibitem[Su et~al.(2018)Su, Zhang, Chen, Yi, Chen, and Gao]{su2018robustness}
Dong Su, Huan Zhang, Hongge Chen, Jinfeng Yi, Pin-Yu Chen, and Yupeng Gao.
\newblock Is robustness the cost of accuracy?--a comprehensive study on the
  robustness of 18 deep image classification models.
\newblock In \emph{European Conference on Computer Vision}, pp.\  631--648,
  2018.

\bibitem[Tran et~al.(2018)Tran, Li, and Madry]{tran2018spectral}
Brandon Tran, Jerry Li, and Aleksander Madry.
\newblock Spectral signatures in backdoor attacks.
\newblock In \emph{Advances in Neural Information Processing Systems}, pp.\
  8000--8010, 2018.

\bibitem[Tsipras et~al.(2019)Tsipras, Santurkar, Engstrom, Turner, and
  Madry]{tsipras2019robustness}
Dimitris Tsipras, Shibani Santurkar, Logan Engstrom, Alexander Turner, and
  Aleksander Madry.
\newblock Robustness may be at odds with accuracy.
\newblock In \emph{International Conference on Learning Representations}, 2019.

\bibitem[Van Der~Veen et~al.(2016)Van Der~Veen, Fratantonio, and et.
  al.]{van2016drammer}
Victor Van Der~Veen, Yanick Fratantonio, and et. al.
\newblock Drammer: Deterministic rowhammer attacks on mobile platforms.
\newblock In \emph{ACM SIGSAC conference on computer and communications
  security}, pp.\  1675--1689. ACM, 2016.

\bibitem[Wang et~al.(2019)Wang, Yao, Shan, Li, Viswanath, Zheng, and
  Zhao]{wang2019neural}
Bolun Wang, Yuanshun Yao, Shawn Shan, Huiying Li, Bimal Viswanath, Haitao
  Zheng, and Ben~Y Zhao.
\newblock Neural cleanse: Identifying and mitigating backdoor attacks in neural
  networks.
\newblock In \emph{IEEE Symposium on Security and Privacy}, 2019.

\bibitem[Wang et~al.(2018)Wang, Keskar, Xiong, and Socher]{wang2018identifying}
Huan Wang, Nitish~Shirish Keskar, Caiming Xiong, and Richard Socher.
\newblock Identifying generalization properties in neural networks.
\newblock \emph{arXiv preprint arXiv:1809.07402}, 2018.

\bibitem[{Wang} et~al.(2018){Wang}, {Wang}, {Ye}, {Zhao}, and {Lin}]{8646578}
S.~{Wang}, X.~{Wang}, S.~{Ye}, P.~{Zhao}, and X.~{Lin}.
\newblock Defending dnn adversarial attacks with pruning and logits
  augmentation.
\newblock In \emph{2018 IEEE Global Conference on Signal and Information
  Processing (GlobalSIP)}, pp.\  1144--1148, 2018.

\bibitem[Wang et~al.(2018{\natexlab{a}})Wang, Wang, Zhao, Wen, Kaeli, Chin, and
  Lin]{wang2018defensive}
Siyue Wang, Xiao Wang, Pu~Zhao, Wujie Wen, David Kaeli, Peter Chin, and Xue
  Lin.
\newblock Defensive dropout for hardening deep neural networks under
  adversarial attacks.
\newblock In \emph{Proceedings of the International Conference on
  Computer-Aided Design}, ICCAD ’18, New York, NY, USA, 2018{\natexlab{a}}.
  Association for Computing Machinery.
\newblock ISBN 9781450359504.
\newblock \doi{10.1145/3240765.3264699}.
\newblock URL \url{https://doi.org/10.1145/3240765.3264699}.

\bibitem[Wang et~al.(2018{\natexlab{b}})Wang, Gu, Mehta, Zhao, and
  Bernal]{wang2018towards}
Timothy~E Wang, Jack Gu, Dhagash Mehta, Xiaojun Zhao, and Edgar~A Bernal.
\newblock Towards robust deep neural networks.
\newblock \emph{arXiv preprint arXiv:1810.11726}, 2018{\natexlab{b}}.

\bibitem[Wu et~al.(2017)Wu, Zhu, et~al.]{wu2017towards}
Lei Wu, Zhanxing Zhu, et~al.
\newblock Towards understanding generalization of deep learning: Perspective of
  loss landscapes.
\newblock In \emph{International Conference on Machine Learning}, 2017.

\bibitem[{Xu} et~al.(2018){Xu}, {Liu}, {Zhao}, {Chen}, {Zhang}, {Erdogmus},
  {Wang}, and {Lin}]{xu2018structured}
K.~{Xu}, S.~{Liu}, P.~{Zhao}, P.-Y. {Chen}, H.~{Zhang}, D.~{Erdogmus},
  Y.~{Wang}, and X.~{Lin}.
\newblock {Structured Adversarial Attack: Towards General Implementation and
  Better Interpretability}.
\newblock \emph{ArXiv e-prints}, August 2018.

\bibitem[Yu et~al.(2018)Yu, Liu, Wang, and Chen]{yu2018interpreting}
Fuxun Yu, Chenchen Liu, Yanzhi Wang, and Xiang Chen.
\newblock Interpreting adversarial robustness: A view from decision surface in
  input space.
\newblock \emph{arXiv preprint arXiv:1810.00144}, 2018.

\bibitem[Zhao et~al.(2018)Zhao, Liu, Wang, and Lin]{zhao2018admm}
Pu~Zhao, Sijia Liu, Yanzhi Wang, and Xue Lin.
\newblock An admm-based universal framework for adversarial attacks on deep
  neural networks.
\newblock In \emph{ACM Multimedia 2018}, 2018.
\newblock ISBN 978-1-4503-5665-7.
\newblock \doi{10.1145/3240508.3240639}.
\newblock URL \url{http://doi.acm.org/10.1145/3240508.3240639}.

\bibitem[Zhao et~al.(2019{\natexlab{a}})Zhao, Liu, Chen, Hoang, Xu, Kailkhura,
  and Lin]{zhao2019design}
Pu~Zhao, Sijia Liu, Pin-Yu Chen, Nghia Hoang, Kaidi Xu, Bhavya Kailkhura, and
  Xue Lin.
\newblock On the design of black-box adversarial examples by leveraging
  gradient-free optimization and operator splitting method.
\newblock In \emph{ICCV 2019}, October 2019{\natexlab{a}}.

\bibitem[Zhao et~al.(2019{\natexlab{b}})Zhao, Wang, Gongye, Wang, Fei, and
  Lin]{pu2019fault}
Pu~Zhao, Siyue Wang, Cheng Gongye, Yanzhi Wang, Yunsi Fei, and Xue Lin.
\newblock Fault sneaking attack: a stealthy framework for misleadingdeep neural
  networks.
\newblock \emph{DAC}, 2019{\natexlab{b}}.

\bibitem[Zhao et~al.(2020)Zhao, Chen, Wang, and Lin]{zhao2020zongd}
Pu~Zhao, Pin-Yu Chen, Siyue Wang, and Xue Lin.
\newblock Towards query-efficient black-box adversary with zeroth-order natural
  gradient descent.
\newblock In \emph{AAAI 2020}, 2020.

\end{thebibliography}
\bibliographystyle{iclr2020_conference}

\clearpage
\appendix

\section*{Appendix}
\setcounter{equation}{0}
\setcounter{figure}{0}
\setcounter{table}{0}
\makeatletter
\renewcommand{\theequation}{A\arabic{equation}}
\renewcommand{\thefigure}{A\arabic{figure}}
\renewcommand{\thetable}{A\arabic{table}}
\appendix

\section{Network Architecture and Training}
\label{appen_network}
In this paper we mainly use two model architectures, VGG and ResNet. The VGG model \citep{simonyan2014very} has 13 convolutional layers and 3 fully connected layers.  The ResNet model is based on the Preactivation-ResNet implementation \citep{he2016identity} with 26 layers.
The clean test accuracy of untampered models of different architectures on CIFAR-10 and SVHN are given in Table \ref{tab: clean_accuracy_models_datasets_backdoor}. 
All of the experiments are performed on 6 GTX 1080Ti GPUs. The experiments are implemented with Python and Pytorch. 

\begin{table}[h]
\begin{center}
\caption{Test accuracy of untampered models for different datasets and model architectures.}
\label{tab: clean_accuracy_models_datasets_backdoor}
\scalebox{1.0}{
\begin{threeparttable}
\begin{tabular}{c|c|c|c}
\toprule[1pt]
 & architecture &  VGG   & ResNet \\
\hline
\multirow{2}{*}{\makecell{CIFAR-10}}   & model $(t=0)$ &  88\% & 87\% \\
& model $(t=1)$ & 86\%  & 91\% \\
\hline
\multirow{2}{*}{\makecell{SVHN}}   & model $(t=0)$& 96\%  & 97\% \\
& model $(t=1)$ &  98\% &  99\%\\
\bottomrule[1pt]
\end{tabular}
\end{threeparttable}
}
\end{center}
\vspace{-4mm}
\end{table}


\section{Regular Path Connection of Untampered Models on SVHN (ResNet)}
\label{appen_untampered}
The performance of regular path connection of untampered models on SVHN with ResNet is presented in Figure \ref{fig: regular_svhn_res}.

\begin{figure}[h]    
\hspace{-5.5mm}
 \centering
\begin{tabular}{p{1.9in}p{1.9in}p{1.2in}}
\parbox{1.9in}{\centering Inference on training set}
 & \parbox{1.9in}{\centering  Inference on test set}
  & \parbox{1.2in}{\centering  Legend}\\
\includegraphics[align=c,width=1.9in]{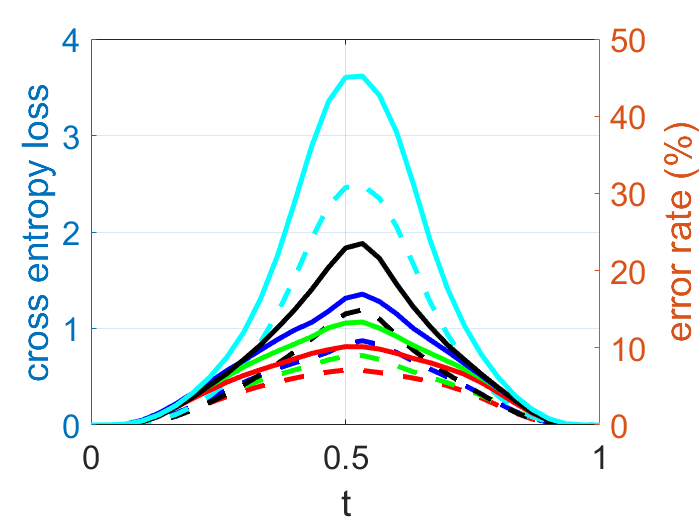}  &
\includegraphics[align=c,width=1.9in]{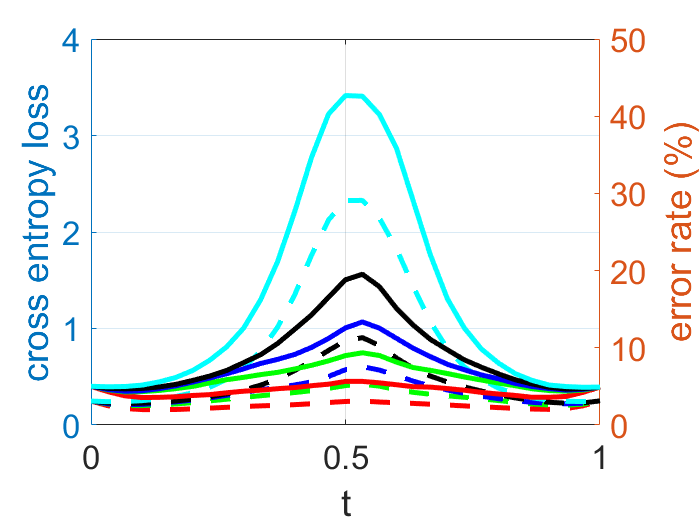} & 
\includegraphics[align=c,width=1.35in]{figs_new/legend2.png} 
\end{tabular}
\vspace{-2mm}
\caption{Loss and error rate on the path connecting two untampered ResNet models trained on SVHN. The path connection is trained using different settings as indicated by the curve colors. The inference results on test set  are evaluated  using \textcolor{black}{5000} samples, which are separate from what are used for path connection.  }
\label{fig: regular_svhn_res}
\vspace{-4mm}
\end{figure}

\section{Illustration and Implementation Details of Backdoor and Error-Injection Attacks}
\label{appen_backdoor}

\paragraph{Backdoor attack}
The backdoor attack is implemented by poisoning the training dataset and then training a backdoored model with this training set. To poison the training set, we randomly pick 10\% images from the training dataset and add a trigger to each image. The shape and location of the trigger  is shown in Figure \ref{fig: example}. Meanwhile, we set the labels of the triggered images to the target label(s) as described in Section \ref{subsec_backdoor}. 

\begin{figure}[h]    
 \centering
\begin{tabular}{p{0.1in}p{1.1in}p{1.1in}p{1.1in}p{1.1in}}
 \vspace{-0.35in} \rotatebox{90}{\parbox{0.6in}{\centering \footnotesize CIFAR-10 }}  &  \includegraphics[align=c,width=1.1in]{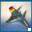} &
\includegraphics[align=c,width=1.1in]{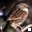} & 
\includegraphics[align=c,width=1.1in]{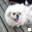} &   \includegraphics[align=c,width=1.1in]{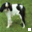}
\\ 
\\
 \vspace{-0.35in} \rotatebox{90}{\parbox{0.6in}{\centering \footnotesize SVHN }} & \includegraphics[align=c,width=1.1in]{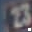} &
\includegraphics[align=c,width=1.1in]{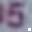} & 
\includegraphics[align=c,width=1.1in]{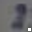}   &  \includegraphics[align=c,width=1.1in]{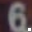}
\end{tabular}
\caption{Examples of backdoored images on CIFAR-10 and SVHN. The triggers are white blocks located at the right-bottom area of each image.} \vspace{-2mm}
\label{fig: example}
\end{figure}

\paragraph{Error-injection attack}

We select 1000 images from the test set and pick 4 images as targeted samples with randomly selected  target labels for inducing attack. The target labels of the 4 selected images are different from their original correct labels. The goal of the attacker is to change the classification of the 4 images to the target labels while keeping the classification of the remaining 996 images unchanged through modifying the model parameters.
To obtain the models with injected errors on CIFAR-10, we first train two models with a clean accuracy of 88\% and 86\%, respectively. 
Keeping the classification of a number of images unchanged can help to mitigate the accuracy degradation incurred by the model weight perturbation.  After perturbing the model weights, the 4 errors can be injected into the model successfully with 100\% accuracy for their target labels. The accuracy for other clean images become 78\% and 75\%, respectively.

\section{Prediction Error on the Triggered Data}
\label{appen_true_error}
We show the prediction error of the triggered data relative to the true labels on all datasets and networks in Figure \ref{fig: backdoor_cifar_VGG2}. The error rate means the fraction of triggered images having top-1 predictions different from the original true labels. The prediction error rate of triggered data is high at path ends ($t=0,1$) since the two end models are tampered. It has similar trend as standard test error for models not too close to the path ends, suggesting path connection can find models having good classification accuracy on triggered data. 

\begin{figure}[h]   
 \centering
\begin{tabular}{p{1.2in}p{1.2in}p{1.2in}p{1.2in}}
 \parbox{1.2in}{\footnotesize \centering CIFAR-10 (VGG)} & 
 \parbox{1.2in}{\footnotesize \centering CIFAR-10 (ResNet)} & 
 \parbox{1.2in}{\footnotesize \centering SVHN (VGG)} & 
 \parbox{1.2in}{\footnotesize \centering SVHN (ResNet)} 
 \\
\includegraphics[align=c,width=1.2in]{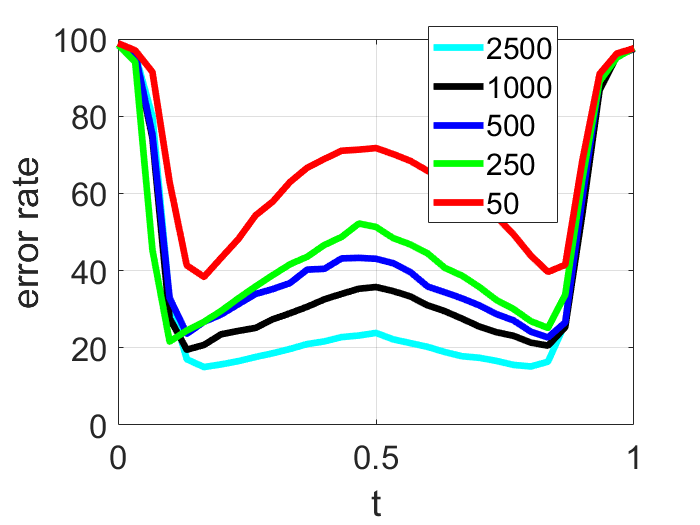}  &
\includegraphics[align=c,width=1.2in]{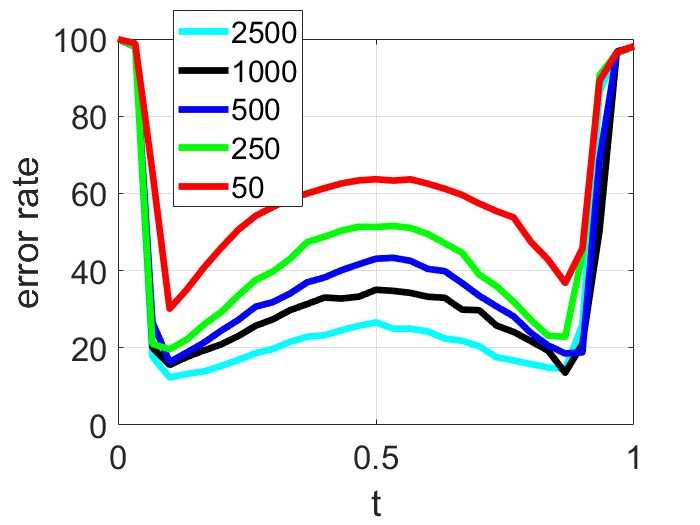} & 
\includegraphics[align=c,width=1.2in]{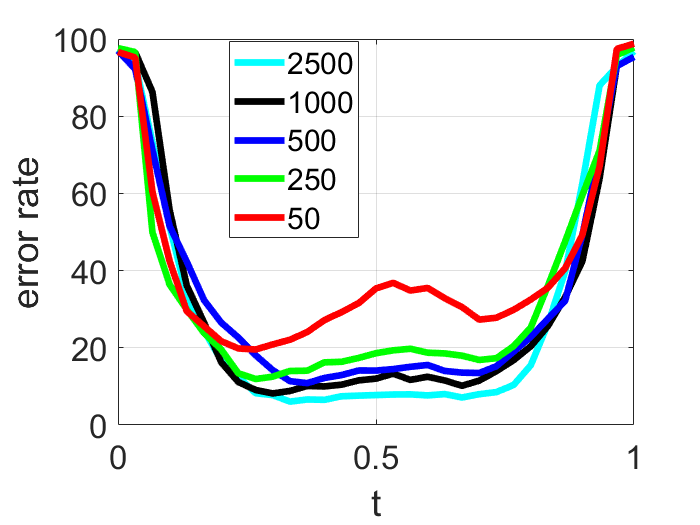} &
\includegraphics[align=c,width=1.2in]{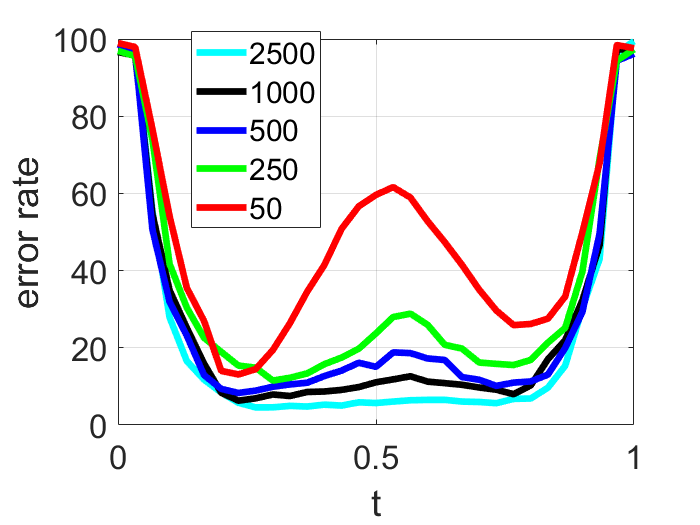} 
\\
\end{tabular}
\caption{Prediction error rate against  backdoor attacks on the connection path.}
\label{fig: backdoor_cifar_VGG2}
\end{figure}

\section{More Results on Path Connection against Backdoor Attacks}
\label{appen_backdoor_more}

\paragraph{Implementation Details for 
\label{appen_baseline}
Table \ref{tab: backdoor_comparison}}
For our proposed path connection method, we train the connection using different number of images as given in Table \ref{tab: backdoor_comparison} for 100 epochs and then report the performance of the model associated with a selected index $t$ on the path.  For the fine-tuning and training-from-scratch methods, we report the model performance after training for 100 epochs. For the random Gaussian perturbation to model weights, we evaluate the model performance under Gaussian noise perturbations on the model parameters. There are two given models which are the models at $t=0$ and $t=1$. The Gaussian noise has zero mean with a standard deviation of  the absolute value of the difference between the two given models. Then we add the Gaussian noise to the two given models respectively and test their accuracy for clean and triggered images. For Gaussian noise, the experiment is performed multiple times (50 times) and we report the average accuracy.   
We can see that adding Gaussian noise perturbations to the model does not necessarily change the model status from robust to non-robust or from non-robust to robust. The path connection or evolution  from the model at    $t=0$ to that $t=1$ follows a specific path achieving robustness against backdoor attack rather than random exploration. For pruning, we use the filter pruning method \citep{li2016pruning} to prune filters from convolutional  neural networks (CNNs) that are identified as having a small effect on the output accuracy.   By removing the whole filters in the network together with their connecting feature maps, the computation costs are reduced significantly.  We first prune about 60\% of its parameters for VGG or 20\% parameters for ResNet. Then we retrain the network with different number of images  as given in Table \ref{tab: backdoor_comparison} for 100 epochs. The clean accuracy and backdoor accuracy are as reported.

Figure \ref{fig: backdoor_cifar_appendix} shows the error rates of clean and backdoored samples using CIFAR-10 on the connection path  against single-target attacks.

\begin{figure}[h]    
 \centering
\begin{tabular}{p{1.9in}p{1.9in}p{1.2in}}
\parbox{1.9in}{\centering VGG}
 & \parbox{1.9in}{\centering ResNet}
   & \parbox{1.2in}{\centering Legend}\\
\includegraphics[align=c,width=1.9in]{figs_new/single_target_backdoor_cifar_vgg.png} 
&
\includegraphics[align=c,width=1.9in]{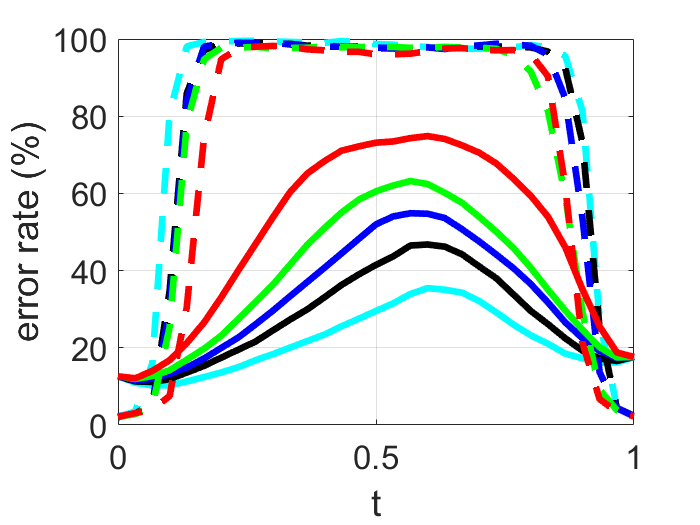} &
\includegraphics[align=c,width=1.2in]{figs_new/legend.png} 
\end{tabular}
\caption{Error rate of single-target backdoor attack on the  connection path for CIFAR-10. The error rate of clean/backdoored samples means standard-test-error/attack-failure-rate, respectively.} 
\label{fig: backdoor_cifar_appendix}
\end{figure}

Figure \ref{fig: backdoor_svhn} shows the error rates of clean and backdoored samples using SVHN on the connection path  against single-target attacks.

\begin{figure}[h]    
 \centering
\begin{tabular}{p{1.9in}p{1.9in}p{1.2in}}
\parbox{1.9in}{\centering VGG}
 & \parbox{1.9in}{\centering  ResNet}
   & \parbox{1.2in}{\centering  Legend}
   \\
\includegraphics[align=c,width=1.9in]{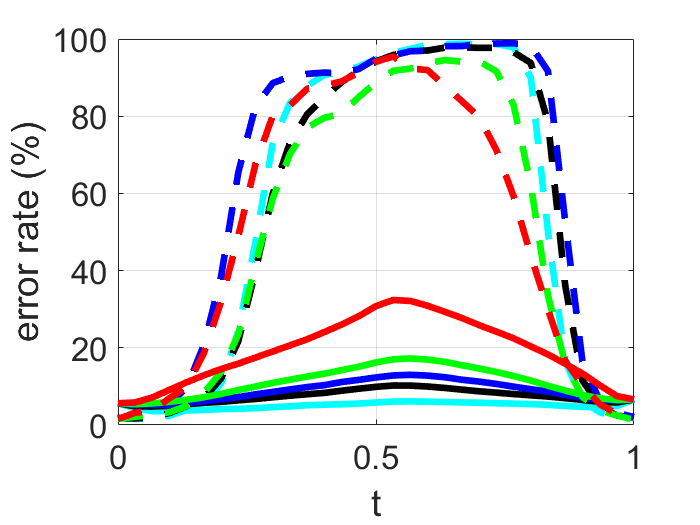} 
&
\includegraphics[align=c,width=1.9in]{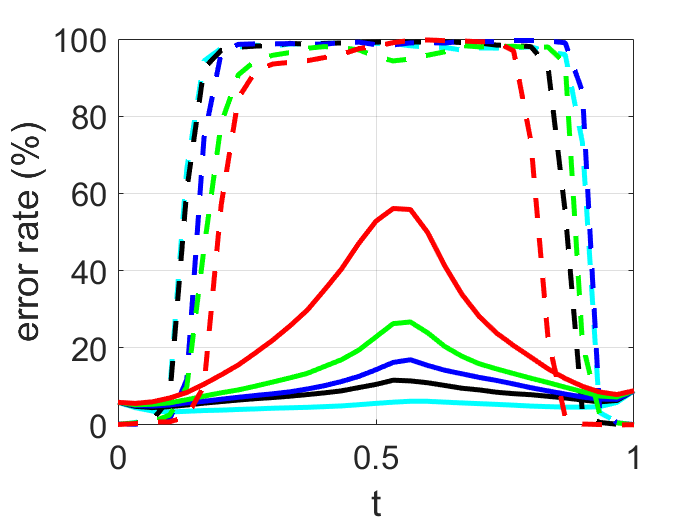} &
\includegraphics[align=c,width=1.2in]{figs_new/legend.png} 
\end{tabular}
\caption{Error rate of single-target backdoor attack on the connection path for SVHN. The error rate of clean/backdoored samples means standard-test-error/attack-failure-rate, respectively.} 
\label{fig: backdoor_svhn}
\end{figure}

Table \ref{tab: backdoor_comparison_2} shows the performance comparison of path connection and other baseline methods 
against single-target backdoor attack evaluated on CIFAR-10 (ResNet) and SVHN (VGG).

\begin{table}[h]
\begin{center}
\caption{Performance comparison of path connection and baselines against single-target backdoor attack.  The clean/backdoor accuracy means standard-test-accuracy/attack-success-rate, respectively.} 
\label{tab: backdoor_comparison_2}
\scalebox{0.9}{
\begin{threeparttable}
\begin{tabular}{c|c|c|c|c|c|c|c}
\toprule[1pt]
& & Methods / bonafide data size & 2500 & 1000  & 500  &  250 & 50\\
\midrule[1pt]
\multirow{ 12}{*}{\makecell{CIFAR-10 \\ (ResNet)}}   &\multirow{ 6}{*}{ \makecell{Clean \\  Accuracy}}&Path connection  ($t=0.2$) & 87\% &  83\% & 80\%  & 77\%  &  67\% \\
& &Fine-tune &  85\% &  84\% &  83\% & 83\% & 72\% \\
& &Train from scratch & 42\%  & 36\%  & 33\% &  29\% &  22\% \\
& & Noisy model ($t=0$) & 12\% & 12\% & 12\%& 12\%& 12\% \\
& & Noisy model ($t=1$) &  10\%  &  10\%&  10\%&  10\%&  10\%\\
& &  Prune  &   85\%  & 82\% & 80\%& 79\% & 77\%  \\
\cline{2-8}
&\multirow{ 6}{*}{ \makecell{Backdoor \\ Accuracy }}&Path connection  ($t=0.2$) &  0.6\% & 1.2\% & 1.3\% & 2.3\% & 5.3\% \\
&  & Fine-tune &  0.7\%  & 8.7\% & 19\% & 20\% & 18\%\\
&  & Train from scratch & 3.7\%  & 4.3\% & 3.5\% &  8.2\% &  6.5\%  \\
& &  Noisy model ($t=0$) & 95\%  & 95\%  & 95\%  & 95\%  & 95\% \\
& &  Noisy model ($t=1$) & 97\% & 97\%& 97\%& 97\%& 97\% \\
& &  Prune  &   37\%& 52\% & 78\% & 86\% &88\%  \\
\midrule[1pt]
\multirow{12}{*}{\makecell{SVHN \\ (VGG)}}   &\multirow{ 6}{*}{ \makecell{Clean \\  Accuracy} }&Path connection   ($t=0.7$) & 95\% & 93\% & 91\% & 87\% & 75\% \\
& &Fine-tune & 93\%  &  92\%  & 90\%  & 89\% &  77\%\\
& &Train from scratch & 88\% & 82\% &  76\%  & 54\%  &  25\% \\
& & Noisy model ($t=0$) &   22\%  &   22\%  &   22\%  &   22\%  &   22\%  \\
& & Noisy model ($t=1$) &   16\% &   16\% &   16\% &   16\% &   16\%  \\
& &  Prune  &   94\% & 92\% & 91\% & 89\% & 86\% \\
\cline{2-8}
&\multirow{ 6}{*}{ \makecell{Backdoor \\ Accuracy }}&Path connection ($t=0.7$)& 2\% & 6\% & 2\% & 9\% &  22\% \\
&  & Fine-tune &   11\% &  10\% &  13\%  & 30\% &  61\%  \\
&  & Train from scratch & 2.6\% & 2.3\% & 3.3\% & 4.7\% &  13\% \\
& & Noisy model ($t=0$) & 79\%  & 79\%  & 79\%  & 79\%  & 79\% \\
& & Noisy model ($t=1$) &  62\% &  62\%&  62\%&  62\%&  62\%\\
& &  Prune  &   75\%  &69\%&  78\% &86\% & 91\%  \\
\bottomrule[1pt]
\end{tabular}
\end{threeparttable}
}
\end{center}
\end{table}


\section{More Results on Path Connection against Error-Injection Attacks}
\label{appen_injection_more}

\paragraph{Implementation Details for Table \ref{tab: injection_comparison}}
\label{appen_baseline_injection}

For our proposed path connection method, we train the connection using different number of images as given in Table  \ref{tab: injection_comparison} for 100 epochs and then report the performance of the model associated with a selected index on the path. The start model and end model have been injected with the same  4  errors (misclassifying 4 given images), starting from two different unperturbed models obtained with different training hyper-parameters. 
For the fine-tuning and training-from-scratch methods, we report the model performance after training for 100 epochs.  For the random Gaussian perturbation to model weights, we evaluate the model performance under Gaussian noise perturbations on the model parameters.  There are two given models which are the models at $t=0$ and $t=1$.  The Gaussian noise has zero mean with a standard deviation of the absolute value of the difference between the two given models. Then we add the Gaussian noise to the two given models respectively and test their accuracy for clean and triggered images. The experiment is performed multiple times (50 times) and we report the average accuracy.  We can see that adding Gaussian noise perturbations to the model does not necessarily change the model status from robust to non-robust or from non-robust to robust. The path connection or evolution from the model at $t= 0$ to that $t= 1$ follows a specific path achieving robustness against backdoor attack rather than random exploration.
The training-from-scratch baseline method usually obtains the lowest clean test accuracy, especially in the case of training with 50 images, where training with so little images does not improve the accuracy.

Figure \ref{fig: injection_cifar_res} shows the performance of path connection against error-injection attacks evaluated on CIFAR-10 (ResNet) and SVHN (VGG).

\begin{figure}[h]    
 \centering
\begin{tabular}{p{1.9in}p{1.9in}p{1.2in}}
\parbox{1.9in}{\centering  CIFAR-10 (ResNet)}
 & \parbox{1.9in}{\centering  SVHN (VGG)}
  & \parbox{1.2in}{\centering  Legend} \\
\includegraphics[align=c,width=1.9in]{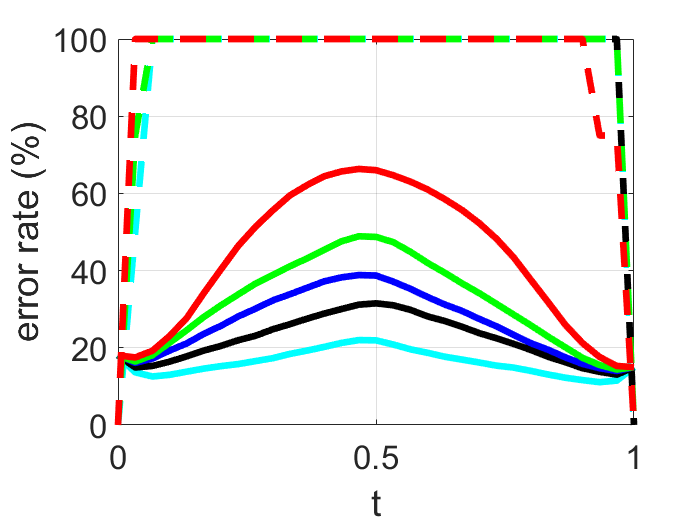}  &
\includegraphics[align=c,width=1.9in]{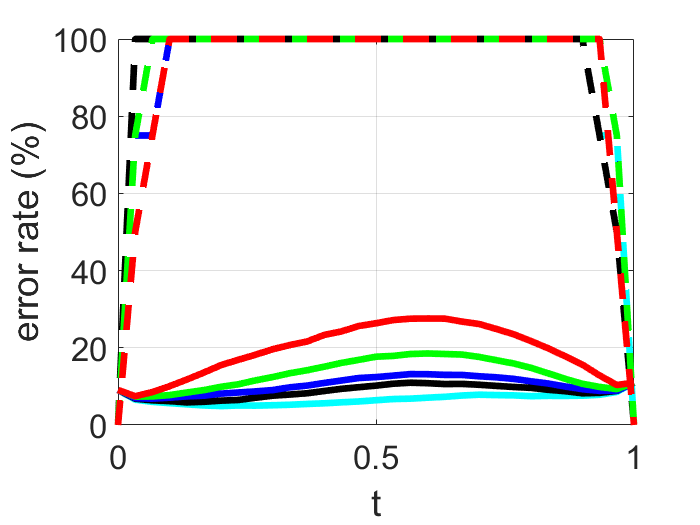} & 
\includegraphics[align=c,width=1.2in]{figs_new/legend.png} 
\end{tabular}
\caption{Error rate against error-injection attack on the connection path for CIFAR-10 (VGG). 
The error rate of clean/targeted samples means standard-test-error/attack-failure-rate, respectively.} 
\label{fig: injection_cifar_res}
\end{figure}

Table \ref{tab: injection_comparison} and \ref{tab: injection_comparison_full} 
show the performance comparison of path connection and other baseline methods against error-injection attacks evaluated on all combinations of network architectures and datasets.

\begin{table}[h]
\begin{center}
\caption{Performance evaluation against error-injection attack. The clean/injection accuracy means standard-test-accuracy/attack-success-rate, respectively. Path connection attains the highest clean accuracy and lowest (0\%) attack accuracy among all methods.}
\label{tab: injection_comparison_full}
\scalebox{0.92}{
\begin{threeparttable}
\begin{tabular}{c|c|c|c|c|c|c|c}
\toprule[1pt]
& & Methods / bonafide data size & 2500 & 1000  & 500  &  250 & 50\\
\midrule[1pt]
\multirow{ 12}{*}{\makecell{CIFAR-10 \\ (ResNet)}}   &\multirow{ 6}{*}{ \makecell{Clean \\  Accuracy}}&Path connection ($t=0.1$) & 87\% &  83\% &  81\% & 80\%  &  77\% \\
& & Fine-tune & 80\% & 79\% & 78\% &  75\% & 72\%\\
& &Train from scratch &  46\% &  39\% & 32\%   &  28\%   &  18\% \\
&  &Noisy model ($t=0$) &  11\%  &  11\% &  11\% &  11\% &  11\% \\
& & Noisy model ($t=1$) & 10\% & 10\% & 10\% & 10\% & 10\%  \\
& & Prune  &  84\% &  82\%& 81\%& 80\%& 78\% \\
\cline{2-8}
&\multirow{ 6}{*}{ \makecell{Injection \\ Accuracy }}&Path connection  ($t=0.1$)&  0\% &  0\%&  0\%&  0\%&  0\%\\
&  & Fine-tune & 0\% & 0\% & 0\% & 25\% &25\% \\
&  & Train from scratch &    0\% &  0\%&  0\%&  0\%&  0\%\\
& &  Noisy model ($t=0$) &  33\% &  33\% &  33\% &  33\% &  33\% \\
& &  Noisy model ($t=1$) & 26\%  & 26\% & 26\% & 26\% & 26\%\\
& & Prune  &  0 \% & 0\%& 0\% & 25\%& 25\%\\
\midrule[1pt]
\multirow{ 12}{*}{\makecell{SVHN \\ (VGG)}}   &\multirow{ 6}{*}{ \makecell{Clean \\  Accuracy} }&Path connection  ($t=0.1$)&    96\%  & 94\%  & 93\% &  91\% & 90\% \\
& & Fine-tune &  94\%  & 93\%  & 92\% &90\%  & 81\% \\
& & Train from scratch & 88\% & 82\% & 76\%  & 68\% & 28\% \\
&& Noisy model ($t=0$) &  33\%  & 33\% & 33\% & 33\% & 33\%\\
& & Noisy model ($t=1$) &  37\% &  37\% &  37\% &  37\% &  37\% \\
& & Prune  &  94\%& 93\% & 92\%& 90\%& 89\% \\
\cline{2-8}
&\multirow{ 6}{*}{ \makecell{Injection \\ Accuracy }}&Path connection  ($t=0.1$) &  0\%  &  0\%  &  0\%  &  0\%  &  0\% \\
& & Fine-tune &  0\% & 0\% & 25\% & 0\% & 25\% \\
& &Train from scratch &   0\%  &  0\%  &  0\%  &  0\%  &  0\% \\
& & Noisy model ($t=0$) & 24\%  & 24\%  & 24\%  & 24\%  & 24\%   \\
& & Noisy model ($t=1$) & 29\% & 29\% & 29\% & 29\% & 29\% \\
& & Prune  &   0\% & 25\% &0\% & 25\% & 25\%\\
\bottomrule[1pt]
\end{tabular}
\end{threeparttable}
}
\end{center}
\vspace{-4mm}
\end{table}

\section{Extensions of Path Connection to Different Settings}
\label{appen_one_model}
In Sections \ref{subsec_backdoor} and \ref{subsec_injection}, we consider the scenario where the two given models are tampered using in the same way -- using the same poisoned dataset for backdoor attack and the same targeted images for error-injection attack. Here we discuss how to apply path connection to the case when only one tampered model is given. In addition, we show that the the resilient effect of path connection against these attacks still hold when the two given models are tampered in a different way.

\paragraph{Backdoor and error-injection attacks given one tampered model}

We propose to first fine-tune the model using the bonafide data and then connect the original model with the fine-tuned model. The fine-tuning process uses 2000 images with 100 epochs.  The path connection results are shown in Figure \ref{fig: backdoor_one_tampered}. The start model is a backdoored model with high accuracy for triggered images. The end model is a fine-tuned model where the triggers do not have any effects to cause any misclassification. We can see that through path connection, we can eliminate the influence of the triggers quickly in some cases. For example, with 250 images, the error rate of triggered images reaches 100\% at $t=0.25$ while the clean accuracy at this point is lower than the fine-tuned model at $t=1$.

Similarly, for the case of error-injection attack, we first fine-tune the model using the bonafide data and then connect the original model with the fine-tuned model. We follow the same settings of the backdoor attack for the finetuning and path connection. The performance of the one tampered model case is shown in Figure \ref{fig: injection_one_tempered_two_tempered}(a). We can see that the effects of injected errors can be eliminated quickly through path connection while the clean accuracy is kept high.

 \begin{figure}[h]    
 \centering
\begin{tabular}{p{2.5in}p{1.8in}}
\parbox{2.5in}{\centering Single target attack}
 & \parbox{2.0in}{\centering Legend}\\
\includegraphics[align=c,width=2.5in]{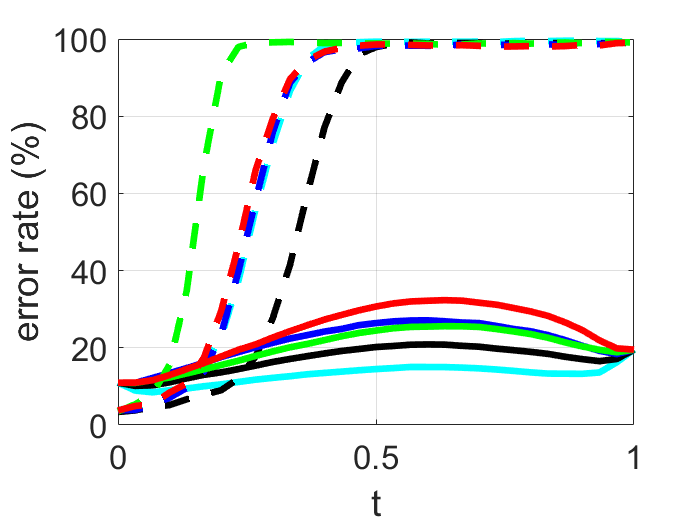}  &
\includegraphics[align=c,width=1.8in]{figs_new/legend.png} 
\end{tabular}
\caption{Error rate under backdoor attack on path connection for CIFAR-10 (VGG). The error rate of clean/triggered samples means the standard-test-error/attack-failure-rate, respectively.} 
\label{fig: backdoor_one_tampered}
\end{figure}

\begin{figure}[h]
\hspace{-5.5mm}
 \centering
\begin{tabular}{p{1.9in}p{1.9in}p{1.2in}}
\parbox{1.9in}{\centering One tampered model}
 & \parbox{1.9in}{\centering  Two tampered models with different injected errors}
  & \parbox{1.2in}{\centering  Legend}\\
\includegraphics[align=c,width=1.9in]{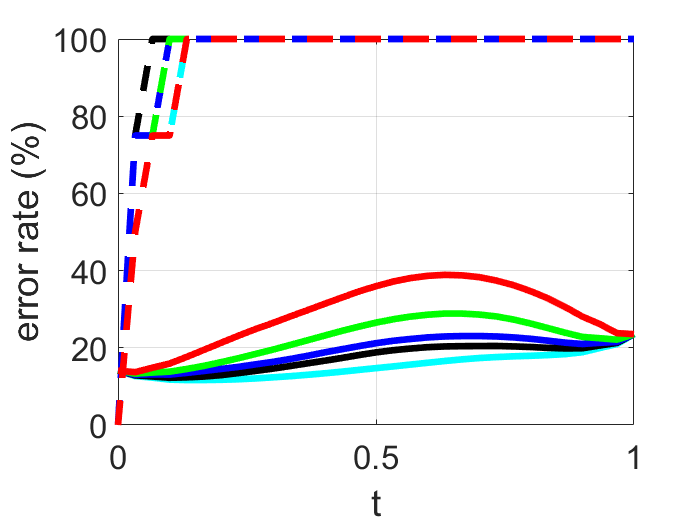}  &
\includegraphics[align=c,width=1.9in]{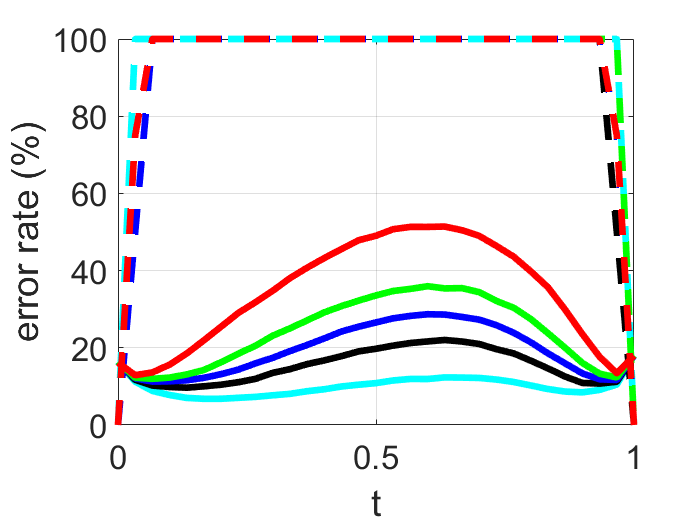} & 
\includegraphics[align=c,width=1.35in]{figs_new/legend.png} 
\end{tabular}
\vspace{-2mm}
\caption{Error rate against error-injection attack on the connection path for CIFAR-10 (VGG). 
The error rate of clean/targeted samples means standard-test-error/attack-failure-rate, respectively.} 
\label{fig: injection_one_tempered_two_tempered}
\vspace{-4mm}
\end{figure}

\paragraph{Backdoor and error-injection attacks given two differently tampered models}

 If the  given two backdoored models are trained with different poisoned datasets (e.g. different number of poisoned  images), the path connection method works as well in this case. 
 We train two backdoored models by poisoning 30\% and 10\% of the training dataset, respectively. The  performance of the path connection between the two models are shown in Figure \ref{fig: backdoor_cifar_VGG_2_tampered}. We can see that the connection can quickly remove the adversarial effects of backdoor attacks.
 
 If the  two  given models with injected errors are trained with different settings (e.g. different total number of training images to inject the errors), the path connection method works as well in this case.  For the start and end models,  the number of injected errors is set to 4. The number of images with the same classification requirement is set to 996 for the start model, and 1496 for the end model, respectively. The performance of path connection is shown in Figure \ref{fig: injection_one_tempered_two_tempered}(b). We can observe that it is able to  obtain a robust model with high clean accuracy.

 \begin{figure}[h]    
 \centering
\begin{tabular}{p{1.9in}p{1.9in}p{1.2in}}
\parbox{1.9in}{\centering Single-target attack}
 & \parbox{1.9in}{\centering All-targets attack}
  & \parbox{1.2in}{\centering Legend}\\
\includegraphics[align=c,width=1.9in]{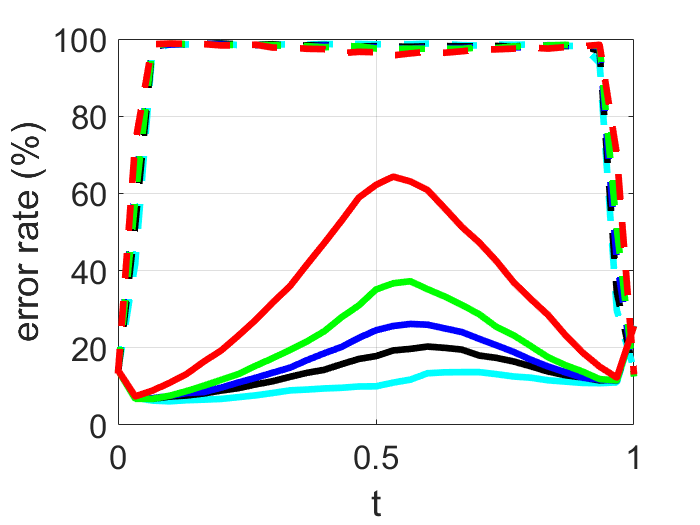}  &
\includegraphics[align=c,width=1.9in]{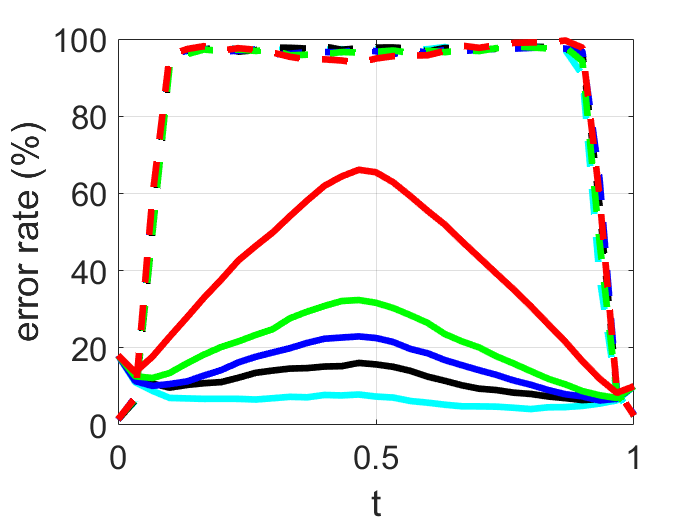} & 
\includegraphics[align=c,width=1.2in]{figs_new/legend.png} 

\end{tabular}
\caption{Error rate against backdoor attack on the connection path for CIFAR-10 (VGG). 
The error rate of clean/triggered samples means the standard-test-error/attack-failure-rate, respectively.} 
\label{fig: backdoor_cifar_VGG_2_tampered}
\end{figure}




\section{Model Weight Space Exploration}
\label{appen_noisy_model}
To explore the model weight space, we add Gaussian noise to the weights of a backdoored model to generate $1000$ noisy models of the backdoored model at $t=0$. The standard normal Gaussian noise has zero mean and a standard  deviation of the absolute difference between the two end models on the path.  The distribution of clean accuracy and backdoor accuracy of these noisy models are reported in Figure \ref{fig: noise_models} (a). The results show that the noisy models are not ideal for attack mitigation and model repairing since they suffer from low clean accuracy and high attack success rate. In other words, it is unlikely, if not impossible, to find good models by chance.
We highlight that finding a path robust to backdoor attack between two backdoored models is highly non-intuitive, considering the high failure rate of adding noise. 
We can observe similar phenomenon for the injection attack in Figure \ref{fig: noise_models} (b).

\begin{figure}[h]   
 \centering
\begin{tabular}{p{2.5in}p{2.5in}}
\includegraphics[align=c,width=2.5in]{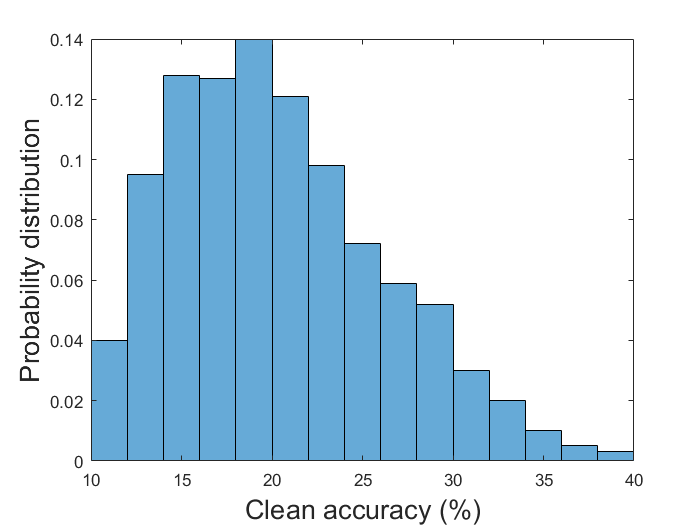}  &
\includegraphics[align=c,width=2.5in]{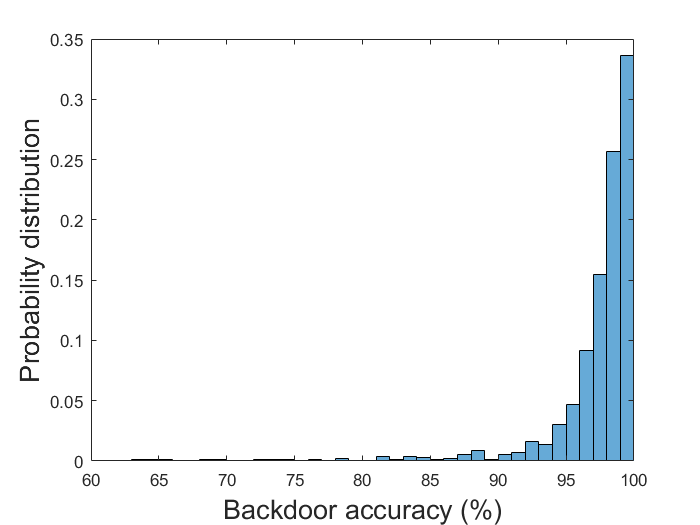} 
\\
 \parbox{5in}{ \centering (a) backdoor attack} 
 \\
\includegraphics[align=c,width=2.5in]{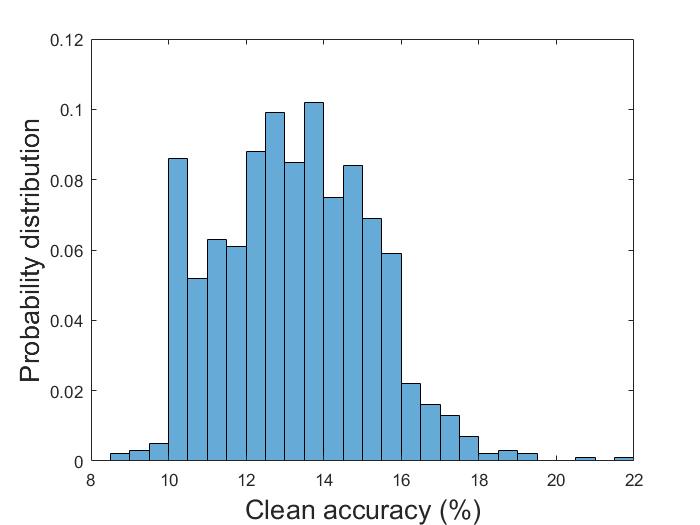} &
\includegraphics[align=c,width=2.5in]{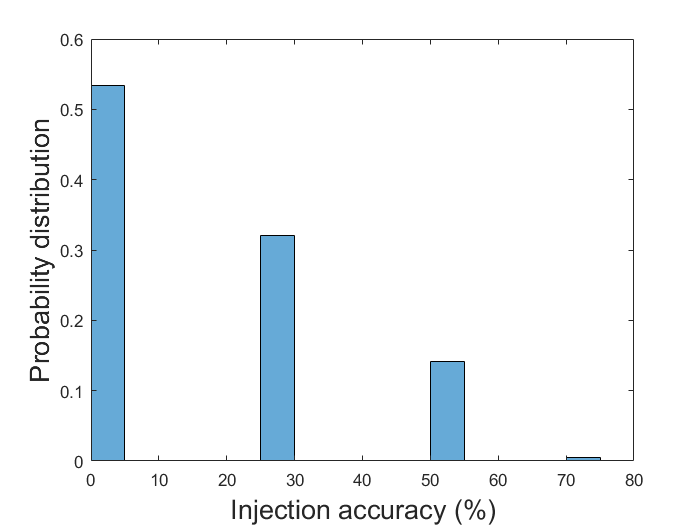} 
\\
 \parbox{5in}{ \centering (b) error-injection attack}  \\
\end{tabular}
\caption{Clean and attack accuracy distribution for 1000 noisy models.}
\label{fig: noise_models}
\end{figure}

\section{Data Similarity of Input Gradients}
\label{appen_similarity}
To provide a technical explanation on why our path connection approach can mitigate backdoor and injection attacks, we compare the similarity of input gradients  between the models on the  connection path ($t \in (0,1)$) and the end models ($t=0$ or $t=1$) in terms of clean data and tampered data. The rationale of inspecting input gradient similarity can be explained using the first-order approximation of the training loss function. Let $l(x|w_t)$ denote the training loss function of a data input $x$ given a model $w_t$ on the connection path, where $t\in[0,1]$. Then the first-order Taylor series approximation on $l(x|w_t)$ with respect to a data sample $x_i$ is $l(x|w_t)=l(x_i|w_t)+\nabla_{x}l(x_i|w_t)^T (x - x_i)$, where $\nabla_{x}l(x_i|w_t)$ is the gradient of $l(x|w_t)$ when $x=x_i$. Based on the flat loss of mode connectivity, we can further assume $l(x_i|w_t)$ is a constant for any $t\in[0,1]$. Therefore, for the same data sample $x_i$, the model $w_t$ ($t\in(0,1)$) will behave similarly as the end model $w_0$ (or $w_1$) if its input gradient $\nabla_{x}l(x_i|w_t)$ is similar to $\nabla_{x}l(x_i|w_0)$ (or $\nabla_{x}l(x_i|w_1)$).
Figure \ref{fig: similarity2} shows the average cosine similarity distance of the input gradients between the models on the path and the end models for the backdoor attack and the injection attack. 
The pairwise similarity metric for each data sample is defined as $ m = | s -1 |/2 $, where $s$ is the  cosine similarity of the input gradients between the models on the path and the end models. Smaller $m$ means higher similarity of the input gradients.  Comparing the minimum of the solid and dashed curves of different colors on the path respectively, which corresponds to the similarity to either one of the two end models,  we find that the similarity of clean data are consistently higher than that of tampered data. Therefore, the sanitized models on the path can maintain similar clean data accuracy and simultaneously mitigate adversarial effects as these models are dissimilar to the end models for the tampered data.

\begin{figure}[h]   
 \centering
\begin{tabular}{p{2.5in}p{2.5in}}
\parbox{2.5in}{\footnotesize \centering Path for the backdoor attack}
 & \parbox{2.5in}{\footnotesize \centering  Path for the injection attack}
  \\
\includegraphics[align=c,width=2.5in]{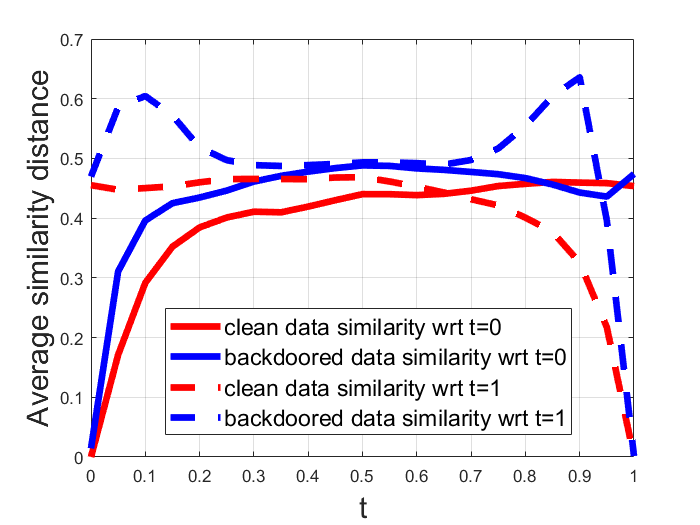}  &
\includegraphics[align=c,width=2.5in]{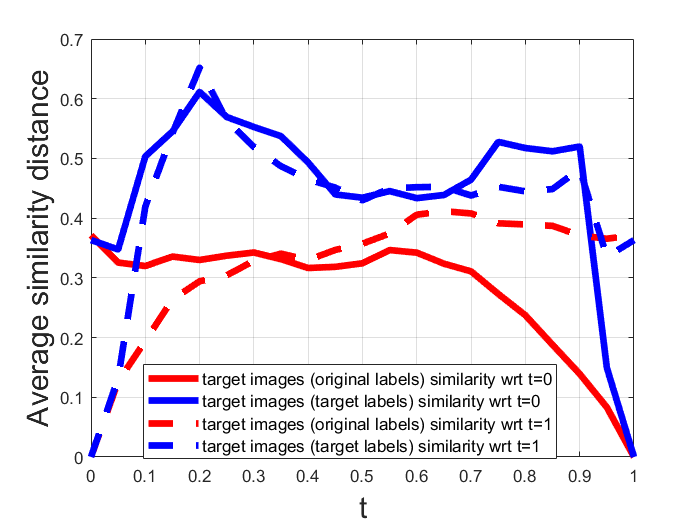} 
\\
\end{tabular}
\caption{  Similarity distance between the models on the path and the two end models for CIFAR-10 (VGG). Smaller distance value means higher similarity.}
\label{fig: similarity2}
\end{figure}

\section{Adaptive Attacks}
\label{appen_adaptive}

We consider the advanced attack setting where the attacker knows path connection is used for defense but  cannot compromise the bonafide data. 
Furthermore, we allow the attacker to  use the \textit{same} path training loss function as the defender.

\textbf{Backdoor attack.}  To attempt breaking path connection, the attacker first separately trains two  backdoored models with one poisoned dataset. 
Then the attacker uses the same poisoned dataset to connect the two models and hence compromises the models on the path. 
\textcolor{black}{Note that when training this tampered path, in addition to learning the path parameter $\theta$, the start and end models are not fixed and they are fine-tuned by the poisoned dataset.}
Next the adversary releases the start and end models $(t=0,1)$ from this tampered path. 
Finally, the defender trains a path from these two models with bonafide data. 
We conduct the advanced (path-aware) single-target backdoor attack experiments on CIFAR-10 (VGG) by poisoning 10\% of images in the training set with a trigger. Figure \ref{fig: backdoor_cifar_VGG3} (a) shows the entire path has been successfully compromised \textcolor{black}{due to the attacker's poisoned path training data}, yielding less than 5\% attack error rate on 10000 test samples. Figure \ref{fig: backdoor_cifar_VGG3} (b) shows the defense performance with different number of clean data to connect the two specific  models released by the attacker. We find that path connection is still resilient to this advanced attack and most models on the path (e.g. $t\in [0.25,0.75]$) can be repaired. Although this advanced attack indeed decreases the portion of robust models on the path, it still does not break our defense.
In Table \ref{tab: backdoor_comparison2}, we also compare the generalization and defense performance and demonstrated that path connection outperforms other approaches.  Moreover,
in the scenario where two tampered models are close in the parameter space (in the extreme case they are identical), we can leverage the proposed path connection method  with one tampered model to alleviate the adversarial effects. The details are discussed in the "Extensions" paragraph and Appendix \ref{appen_one_model}. 

\begin{figure}[h]   
 \centering
\begin{tabular}{p{1.9in}p{1.9in}p{1.3in}}
\includegraphics[align=c,width=1.9in]{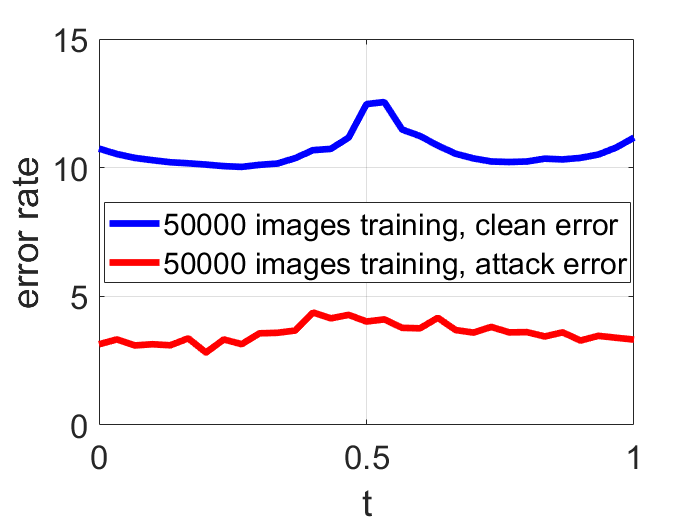}  &
\includegraphics[align=c,width=1.9in]{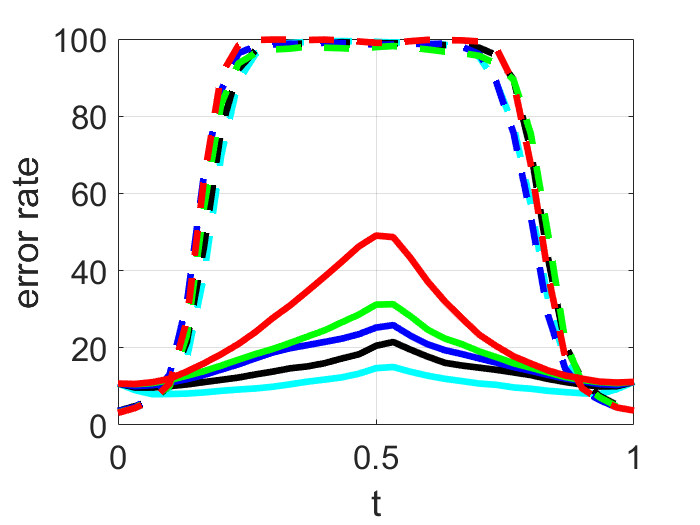} &
\includegraphics[align=c,width=1.3in]{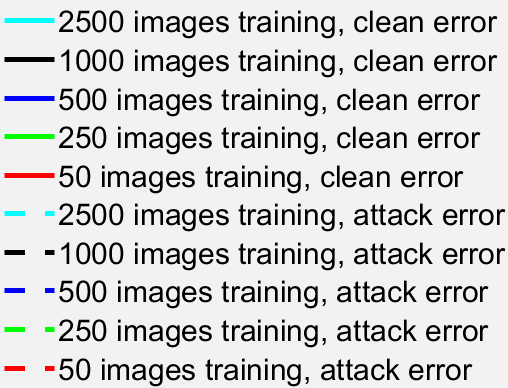} 
\\
\parbox{1.9in}{\footnotesize \centering (a) Path with poisoned data}
 & \parbox{1.9in}{\footnotesize \centering  (b) Path with clean data} & \parbox{1.3in}{\footnotesize \centering  (c) Legend}
  \\
\end{tabular}
\caption{ Error rate against  path-aware single-target backdoor attacks for CIFAR-10 (VGG).}
\label{fig: backdoor_cifar_VGG3}
\end{figure}

\begin{table}[h]
\begin{center}
\caption{Performance against path-aware single-target backdoor attack on CIFAR-10 (VGG).} 
\label{tab: backdoor_comparison2}
\scalebox{0.85}{
\begin{threeparttable}
\begin{tabular}{c|c|c|c|c|c|c|c|c|c|c}
\toprule[1pt]
 & \multicolumn{5}{c|}{ Clean accuracy } & \multicolumn{5}{c}{ Backdoor accuracy }   \\
\hline
Method / Bonafide data size & 2500 & 1000  & 500  &  250 & 50 & 2500 & 1000  & 500  &  250 & 50 \\
\midrule[1pt]
Path connection $(t=0.27)$ & 90\% & 87\% & 83\% & 81\% &  75\% & 3.8\%  & 2.9\% & 3.6\% & 4.2\% & 5.6\% \\
Fine-tune &   88\% & 84\% & 82\% & 80\% & 69\% & 4.1\% &  4.6\% & 4.4\% & 3.9\% & 5.9\% \\
Train from scratch & 58\% & 48\% & 36\% & 28\% & 20\%&   0.6\% &  0.5\% &  0.9\% & 1.7\% & 1.9\%  \\
 Noisy model ($t=0$) &  38\% &  38\% & 38\% & 38\% & 38\% &   91\% & 91\% & 91\% & 91\% & 91\%\\
 Noisy model ($t=1$) &  35\%&  35\%&  35\%&  35\%&  35\% &   86\% &86\% & 86\% & 86\% & 86\% \\
Prune  & 87\%  &  85\% & 83\%  & 81\% & 79\% &   29\% & 48\% & 69\% & 77\% & 81\%  \\
\bottomrule[1pt]
\end{tabular}
\end{threeparttable}
}
\end{center}
\end{table}

\textbf{Error-injection attack.} For the advanced attack in the error-injection setting, the attacker first trains two models with injected errors and then connects the two models with clean data. Then, the attacker tries to inject errors to the three models $w_1$,  $w_2$ and $\theta$. Note that based on Equation (\ref{equ_poly}) or (\ref{eqn_Bezier_quad}), all the models on the path can be expressed as a combination of these three models. Thus, the whole path is expected to be tampered. Next, the attacker releases the start and end models from the ``bad'' path.  Finally, the defender trains a path from these two models with bonafide data.
We conduct the advanced (path-aware) error-injection attack experiments on CIFAR-10 (VGG) by injecting $4$ errors. Figure \ref{fig: backdoor_cifar_VGG4} (a) shows that the entire path has been successfully compromised by the advanced path-aware injection. While the clean error rate is stable across the path, at least $3$ out of
$4$ injected errors (corresponding to 25\% attack error rate) yield successful attacks. 
After training a path to connect the two released models with the bonafide data, the clean error and attack error are shown in Figure \ref{fig: backdoor_cifar_VGG4} (b). We can observe that path connection can effectively eliminate the injected errors on the path (i.e., high attack error rate).

\begin{figure}[h]   
 \centering
\begin{tabular}{p{1.9in}p{1.9in}p{1.3in}}
\includegraphics[align=c,width=1.9in]{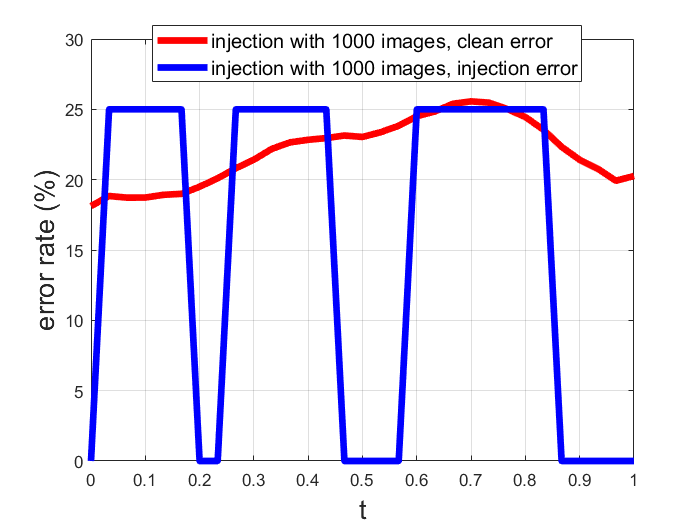}  &
\includegraphics[align=c,width=1.9in]{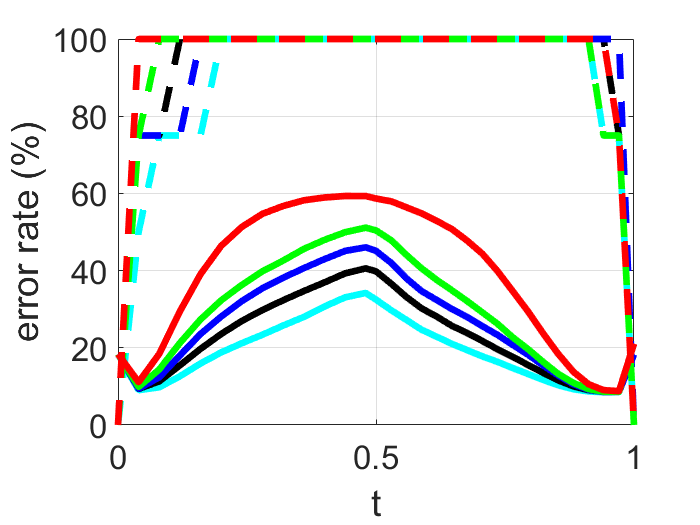} &
\includegraphics[align=c,width=1.3in]{figs_new/rebuttal/legend.png} 
\\
\parbox{1.9in}{\footnotesize \centering (a) Path with injection attack}
 & \parbox{1.9in}{\footnotesize \centering  (b) Path without injection attack} & \parbox{1.3in}{\footnotesize \centering  (c) Legend}
  \\
\end{tabular}
\caption{ Error rate against  path-aware error-injection attacks for CIFAR-10 (VGG).}
\label{fig: backdoor_cifar_VGG4}
\end{figure}


\section{Implementation Details for Evasion Attack and Adversarial Training}
\label{appen_evasion}
\paragraph{Evasion attack}
After training the connection with the training set for 100 epochs, adversarial examples of the whole test set generated by the $\ell_{\infty}$-norm based PGD method \citep{madry2017towards} with attack strength $\epsilon=8/255$ and 10 attack iterations are used to test the performance of the path connection.

\paragraph{Adversarial training}
For adversarial training, at each model weight update stage, we first generate adversarial examples with the PGD method \citep{madry2017towards}. The attack strength is set to  $\epsilon=8/255$  with 10 attack iterations. After the adversarial 
examples are obtained, we then update the model weights based on the training losses induced by these adversarial examples with their correct labels.

\paragraph{Input Hessian}
As the adversarial examples perform small perturbations around the clean images to increase the robustness loss function, thus incurring a misclassification, it relates to the Hessian matrix of the loss function with reference to the input images. A Hessian matrix (or Hessian) is the second-order partial derivatives of a scalar-valued function, describing the local curvature of a function with many variables. In general,   large Hessian spectrum means the function reaches a sharp minima, thus leading to a more vulnerable model as the input can leave this minima with small distortions. By contrast, in the case of flat minima with small Hessian spectrum, it takes more efforts for the input to leave the minima. 

As we find the robustness loss barrier on the path, we are interested in the evolution of the input Hessian for the models on the path. We uniformly pick the models on the connection and compute the largest eigenvalue of the Hessian w.r.t. to the inputs. To deal with the high dimension difficulties for the Hessian calculation,  the power iteration method \citep{martens2012training} is adopted to compute the largest eigenvalue of the input Hessian with the first-order derivatives obtained by back-propagating. Unless otherwise specified, we continue the power iterations until reaching a relative error of 1E-4. 

For ease of visual comparison,
in Figure \ref{fig: input_hessian_vs_adversarial_loss}, we plot the log value of the largest eigenvalue of the input Hessian together with the error rate and loss for  clean images and adversarial examples. We note that the Pearson correlation coefficient (PCC) is indeed computed using the original largest eigenvalue of input Hessian and robustness loss.
As demonstrated in Figure \ref{fig: input_hessian_vs_adversarial_loss}, the evolution of the input Hessian is very similar to the change of the  loss of adversarial examples. As we can see, the largest eigenvalue on the path does not necessarily have a high correlation with the error rate of the clean images in the training set and test set or the adversarial examples. Instead, it seems to be highly correlated with the loss of adversarial examples as they share very similar shapes.  This inspires us to explore the relationship between the largest eigenvalue of the input Hessian and the robustness loss.

\section{Proof of Proposition \ref{prop_correlation}}
\label{appen_prop_correlation}
For simplicity, here we drop the the notation dependency on the input data sample $x$. It also suffices to consider two models $w:=w(t)$ and $w+\Delta w:=w(t+\Delta t)$ for some small $\Delta t$ on the path for our analysis. We begin by proving the following lemma.
\begin{lemma}
\label{lemma_norm}
Given assumption (a), for any vector norm $\|\cdot\|$, for any data sample $x$,
\begin{equation}
 \lVert \nabla _xl\left( w+\varDelta w,x \right) \rVert -\lVert \nabla _xl\left( w,x \right) \rVert  = 0
\end{equation}
\end{lemma}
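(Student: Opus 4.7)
My plan is to deduce the lemma directly from assumption (a). The assumption tells us that for every data sample $x$, the scalar $l(w(t),x)$ does not depend on $t$ as we move along the path; in particular, for any two nearby points $w = w(t)$ and $w + \Delta w = w(t + \Delta t)$ on the path, I would write
\begin{equation}
l(w + \Delta w, x) \;-\; l(w, x) \;=\; 0 \qquad \text{for every } x.
\end{equation}
The key observation is that this identity holds as a function of $x$, not just at a single point, so I may differentiate both sides with respect to $x$.

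Next, I would apply $\nabla_x$ to the identity above. Since the right-hand side is the zero function of $x$, its gradient is the zero vector, and linearity of the gradient operator gives
\begin{equation}
\nabla_x l(w + \Delta w, x) \;-\; \nabla_x l(w, x) \;=\; \nabla_x \bigl[\, l(w + \Delta w, x) - l(w, x) \,\bigr] \;=\; 0.
\end{equation}
Therefore $\nabla_x l(w + \Delta w, x) = \nabla_x l(w, x)$ as vectors, and taking any vector norm $\|\cdot\|$ of both sides yields
\begin{equation}
\|\nabla_x l(w + \Delta w, x)\| \;-\; \|\nabla_x l(w, x)\| \;=\; 0,
\end{equation}
which is the claim.

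\textbf{Anticipated obstacle.} The only delicate point is interpretive rather than technical: assumption (a) must be read as saying that for \emph{each} fixed $x$, the map $t \mapsto l(w(t),x)$ is constant, so that the constant value is allowed to depend on $x$. Under this reading, the identity $l(w+\Delta w, x) = l(w, x)$ holds pointwise in $x$ and the differentiation step above is immediate. If instead one tried to interpret the assumption as holding only on a discrete training sample, the commutation of $\nabla_x$ with the difference would require additional smoothness/interpolation, but the statement of the proposition (and the rest of the excerpt) makes clear that the pointwise-in-$x$ reading is the intended one, so no further machinery is needed.
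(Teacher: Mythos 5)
Your proof is correct and follows essentially the same route as the paper's: invoke assumption (a) to get $l(w+\Delta w,x)=l(w,x)$ as an identity in $x$, differentiate with respect to $x$, and take norms. Your added remark that the constant must be read as $t$-independent but possibly $x$-dependent is a sensible clarification of the same argument, not a different one.
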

\begin{proof}
With assumption (a), we have $l\left( w+\varDelta w,x \right) = l\left( w,x \right)=\textnormal{const}$. Differentiating at both sides with respect to $x$ and then applying the vector norm gives the lemma.
\end{proof}

As the adversarial perturbation is generated using the PGD method \citep{madry2017towards} with an $\epsilon$-$\ell_\infty$ ball constraint, the optimal first-order solution of the perturbation is 
\begin{equation}
\delta^*=\epsilon \cdot \frac{\nabla f\left( x \right)}{\lVert \nabla f\left( x \right) \rVert}
\label{eqn_first_order}
\end{equation}

Assume the PGD method can well approximate the robustness loss, 
the difference of the robustness losses of the two models $w$ and $w+\Delta w$ on the path can be represented as
{ \small
\begin{align}
&\underset{\|\delta\|  \leq \epsilon}{\max}\,\,l\left( w+\varDelta w,x+\delta \right) \,\,-\,\,\underset{\|\delta\| \leq \epsilon}{\max}\,\,l\left( w,x+\delta \right) \,\, \nonumber
\\
& =l\left( w+\varDelta w,x+\delta_{w + \Delta w}^* \right) -l\left( w,x+\delta_{w}^* \right)
\\ \nonumber
& = \left[ l\left( w+\varDelta w,x \right) +\epsilon \frac{\nabla _xl\left( w+\varDelta w,x \right) ^T\nabla _xl\left( w+\varDelta w,x \right)}{\lVert \nabla _xl\left( w+\varDelta w,x \right) \rVert}+\frac{\epsilon ^2}{2}\frac{\nabla _xl\left( w+\varDelta w,x \right) ^T}{\lVert \nabla _xl\left( w+\varDelta w,x \right) \rVert}H_{ w+\varDelta w}\frac{\nabla _xl\left( w+\varDelta w,x \right)}{\lVert \nabla _xl\left( w+\varDelta w,x \right) \rVert} \right]
\label{eqn_p1}
\\
&-\left[ l\left( w,x \right) +\epsilon \frac{\nabla _xl\left( w,x \right) ^T\nabla _xl\left( w,x \right)}{\lVert \nabla _xl\left( w,x \right) \rVert}+\frac{\epsilon ^2}{2}\frac{\nabla _xl\left( w,x \right) ^T}{\lVert \nabla _xl\left( w,x \right) \rVert}H_{w}\frac{\nabla _xl\left( w,x \right)}{\lVert \nabla _xl\left( w,x \right) \rVert} \right] + O(\epsilon^3) 
\\ \nonumber
&\approx l\left( w+\varDelta w,x \right) -l\left( w,x \right) +\epsilon \left[ \lVert \nabla _xl\left( w+\varDelta w,x \right) \rVert -\lVert \nabla _xl\left( w,x \right) \rVert \right] 
\\
&+\frac{\epsilon ^2}{2}\left[ \frac{\nabla _xl\left( w+\varDelta w,x \right) ^T}{\lVert \nabla _xl\left( w+\varDelta w,x \right) \rVert}H_{w+\varDelta w}\frac{\nabla _xl\left( w+\varDelta w,x \right)}{\lVert \nabla _xl\left( w+\varDelta w,x \right) \rVert}-\frac{\nabla _xl\left( w,x \right) ^T}{\lVert \nabla _xl\left( w,x \right) \rVert}H_{w}\frac{\nabla _xl\left( w,x \right)}{\lVert \nabla _xl\left( w,x \right) \rVert} \right] 
\label{eqn_p2}
\\
&=\frac{\epsilon ^2}{2}\left[ \frac{\nabla _xl\left( w+\varDelta w,x \right) ^T}{\lVert \nabla _xl\left( w+\varDelta w,x \right) \rVert}H_{w+\varDelta w}\frac{\nabla _xl\left( w+\varDelta w,x \right)}{\lVert \nabla _xl\left( w+\varDelta w,x \right) \rVert}-\frac{\nabla _xl\left( w,x \right) ^T}{\lVert \nabla _xl\left( w,x \right) \rVert}H_{w}\frac{\nabla _xl\left( w,x \right)}{\lVert \nabla _xl\left( w,x \right) \rVert} \right] 
\label{eqn_diff}
\end{align}
}

The first equality uses the definition of optimal first-order solution of the robustness loss. Using \eqref{eqn_first_order} and Taylor series expansion on $\ell(\cdot,\cdot)$ with respect to its second argument 
gives \eqref{eqn_p1}. Rearranging \eqref{eqn_p1} and using Assumption (b) (i.e., ignoring the $O(\epsilon^3)$ term in \eqref{eqn_p1}) gives \eqref{eqn_p2}.
Finally, Based on \eqref{eqn_p2}, the term $l\left( w+\varDelta w,x \right) -l\left( w,x \right)=0$ due to Assumption (a). Then, applying Lemma \ref{lemma_norm}  to \eqref{eqn_p2} gives \eqref{eqn_diff}. 

Without loss of generality, the following analysis assumes $\|\cdot\|$ to be the Euclidean norm and consider the case when $\frac{\nabla _xl\left( w,x \right)^T v}{\|\nabla _xl\left( w,x \right)\|} = c$. The results can generalize to other norms
since the correlation is invariant to scaling factors.
With assumption (c), for any input Hessian $H$ and its largest eigenvector $v$, we have
\begin{align}
\label{eqn_lower}
     &\frac{\nabla _xl\left( w,x \right) ^T}{\lVert \nabla _xl\left( w,x \right) \rVert}H\frac{\nabla _xl\left( w,x \right)}{\lVert \nabla _xl\left( w,x \right) \rVert}  \nonumber \\
     &=  \left( \frac{\nabla _xl\left( w,x \right)}{\lVert \nabla _xl\left( w,x \right) \rVert} -v + v \right)^T H \left( \frac{\nabla _xl\left( w,x \right)}{\lVert \nabla _xl\left( w,x \right) \rVert } -v + v \right) \nonumber \\
     &= \left( \frac{\nabla _xl\left( w,x \right)}{\lVert \nabla _xl\left( w,x \right) \rVert} -v \right) ^T H \left( \frac{\nabla _xl\left( w,x \right)}{\lVert \nabla _xl\left( w,x \right) \rVert } -v  \right) + 2 \frac{\nabla _xl\left( w,x \right) ^T}{\lVert \nabla _xl\left( w,x \right) \rVert}H v - \lambda_{\max} \nonumber
     \\
     & = (2c-1) \lambda_{\max} + h_{x},
\end{align}
where $h_{x}:=\left( \frac{\nabla _xl\left( w,x \right)}{\lVert \nabla _xl\left( w,x \right) \rVert} -v \right) ^T H \left( \frac{\nabla _xl\left( w,x \right)}{\lVert \nabla _xl\left( w,x \right) \rVert } -v  \right)$ and the last equality uses the fact that $H v = \lambda_{\max} \cdot v$ and  assumption (c). Note that the lower bound is tight when $v$ and $\nabla _xl\left( w,x \right)$ are collinear, i.e.,
$v= \frac{\nabla _xl\left( w,x \right) ^T}{\lVert \nabla _xl\left( w,x \right) \rVert}$, as in this case $h_x=0$ and $c=1$. On the other hand, using the definition of largest eigenvalue gives an upper bound
\begin{equation}
\label{eqn_upper}
\frac{\nabla _xl\left( w+\varDelta w,x \right) ^T}{\lVert \nabla _xl\left( w+\varDelta w,x \right) \rVert}H\frac{\nabla _xl\left( w+\varDelta w,x \right)}{\lVert \nabla _xl\left( w+\varDelta w,x \right) \rVert}\leq \lambda _{\max}.
\end{equation}

Finally, applying \eqref{eqn_upper} and \eqref{eqn_lower} to \eqref{eqn_diff},
we have 
\begin{align}
    \underset{\|\delta\|  \leq \epsilon}{\max}\,\,l\left( w+\varDelta w,x+\delta \right) \,\,-\,\,\underset{\|\delta\| \leq \epsilon}{\max}\,\,l\left( w,x+\delta \right) \,\,  \leq  \frac{\epsilon^2}{2} \left[\lambda_{\max}(t+\Delta t) - (2c-1)\lambda_{\max}(t) - h_x(t)\right]
\end{align}
and 
\begin{align}
    \underset{\|\delta\|  \leq \epsilon}{\max}\,\,l\left( w+\varDelta w,x+\delta \right) \,\,-\,\,\underset{\|\delta\| \leq \epsilon}{\max}\,\,l\left( w,x+\delta \right) \,\,  \geq  \frac{\epsilon^2}{2} \left[(2c-1)\lambda_{\max}(t+\Delta t) - \lambda_{\max}(t) + h_x(t+\Delta t)\right]
\end{align} 

As $v$ becomes more aligned with $\nabla _xl\left( w,x \right)$,
$c$ increases toward 1 and $h_x$ decreases toward 0, implying $2c-1 \approx 1$. Therefore, in this case we have $ \underset{\|\delta\|  \leq \epsilon}{\max}\,\,l\left( w+\varDelta w,x+\delta \right) \,\,-\,\,\underset{\|\delta\| \leq \epsilon}{\max}\,\,l\left( w,x+\delta \right) \,\, $ as a linear function of
$\lambda_{\max}(t+\Delta t) - \lambda_{\max}(t)$ and $\underset{\|\delta\|  \leq \epsilon}{\max}\,\,l\left( w(t),x+\delta \right) \,\, \sim \lambda_{\max}(t)$.

If the higher order term $O(\epsilon^3)$ in \eqref{eqn_p1} is non-negligible, which means Assumption (b) does not hold, then the following analysis will have an extra offset term of the order $O(\epsilon)$ as $c$ increases toward 1, i.e., $\underset{\|\delta\|  \leq \epsilon}{\max}\,\,l\left( w(t),x+\delta \right) \,\, \sim \lambda_{\max}(t) + O(\epsilon)$.

\section{Robust Connection and Model Ensembling}
\label{appen_robust_conn}

\paragraph{Robust connection}
To train a robust path, we minimize the expected maximum loss on the path like the following,
\begin{equation}
\min_{\theta} {E_{t \sim U(0,1)}}\left[ {\mathop {\max }\limits_{{\bf{\delta }} \in S} L({\phi _\theta }(t),{\bf{x}} + {\bf{\delta }})} \right]
\end{equation}
where
\begin{equation}
S = \left\{ {{\bf{\delta }}\left| {{\bf{x}} + {\bf{\delta }} \in {{\left[ {0,1} \right]}^d},{{\left\| {\bf{\delta }} \right\|}_\infty } \leq \epsilon } \right.} \right\}
\end{equation}

\begin{figure}[t] 
 \centering
\begin{tabular}{p{0.1in}p{1.25in}p{1.25in}p{1.25in}p{0.85in}}
 & \parbox{1.25in}{\centering \footnotesize Connection of regular models (PCC=0.98)} &  
\parbox{1.25in}{\centering \footnotesize  Connection of regular and adversarially-trained models (PCC=0.98)}  &  
\parbox{1.25in}{\centering \footnotesize Connection of adversarially-trained models (PCC=0.94) }  & 
\parbox{0.85in}{\centering \footnotesize Legend  } \\
\vspace{-0.4in} \rotatebox{90}{\parbox{0.6in}{\centering \footnotesize loss \& eigenvalue }}  &  \includegraphics[align=c,width=1.25in]{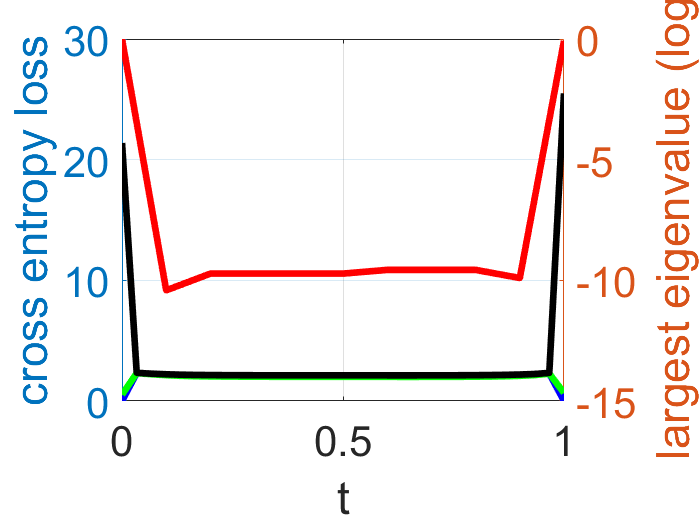} &
\includegraphics[align=c,width=1.25in]{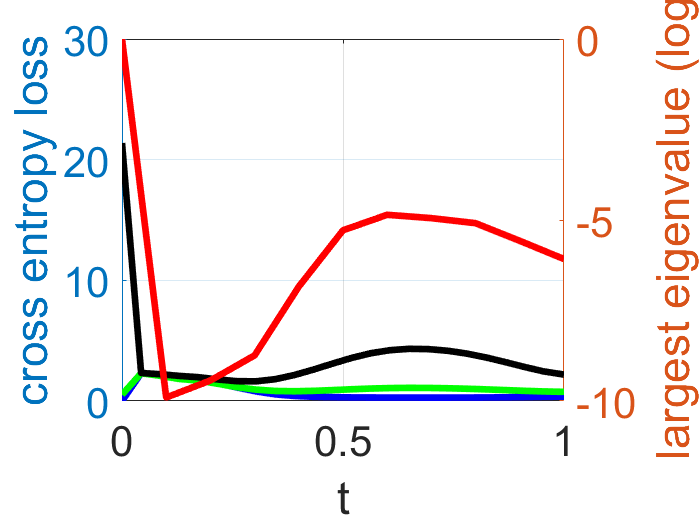}& 
\includegraphics[align=c,width=1.25in]{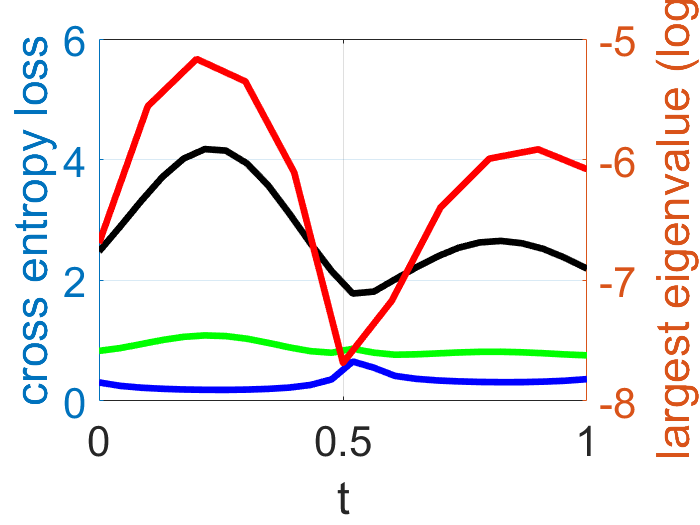} &\includegraphics[align=c,width=0.85in]{figs_new/loss_legend.png} 
\\ 
\vspace{-0.5in} \rotatebox{90}{\parbox{0.9in}{\centering \footnotesize error rate \& \\ attack success rate }} & \includegraphics[align=c,width=1.25in]{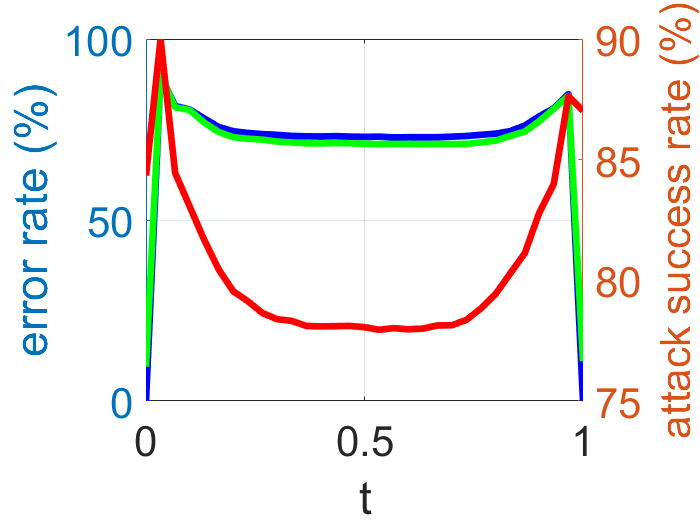} &
\includegraphics[align=c,width=1.25in]{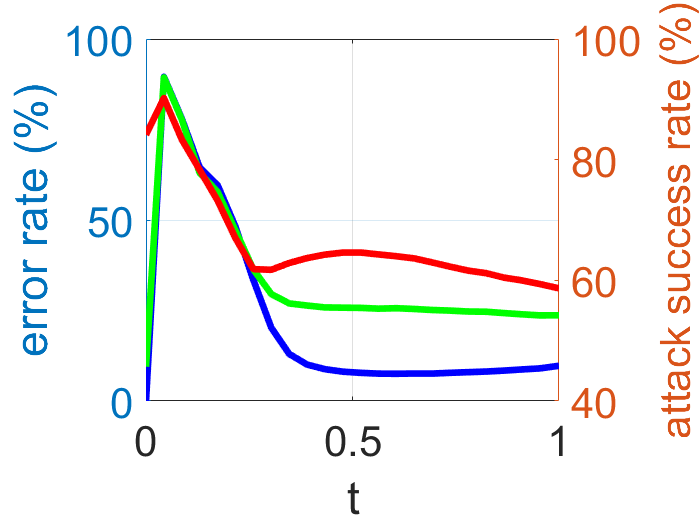} & 
\includegraphics[align=c,width=1.25in]{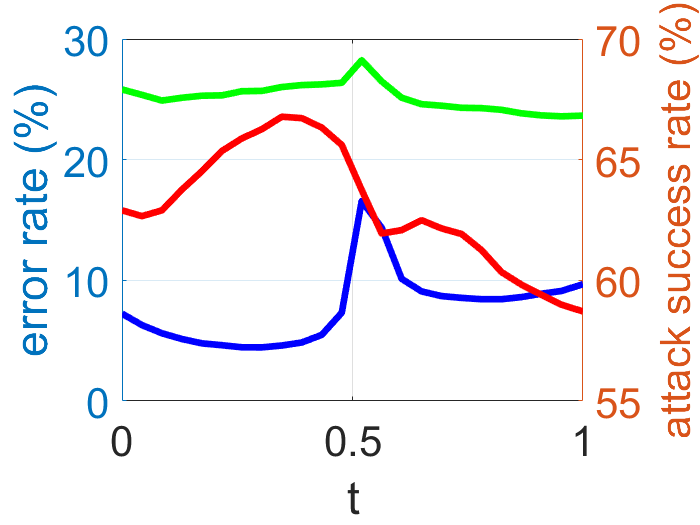}
&\includegraphics[align=c,width=0.85in]{figs_new/error_legend.png}
\end{tabular}
\vspace{-2mm}
\caption{Loss, error rate, attack success rate and largest eigenvalue of input Hessian on the path connecting different model pairs on CIFAR-10 (VGG) using robust connection. The path is obtained with robust training method. The error rate of training/test data means standard training/test error, respectively. There is still a robustness loss barrier between non-robust and robust models, but not there is no robustness loss barrier between robust model pair and non-robust model pair (verified by small loss variance and flat attack success rate on the path). There is also a high correlation
between the robustness loss and the largest eigenvalue of input Hessian, and their Pearson correlation coefficient (PCC) is given in the title of each plot.} 
\label{fig: input_hessian_vs_adversarial_loss_robust_connection}
\vspace{-4mm}
\end{figure}

To solve the problem, we first sample $\tilde t$ from the uniform distribution $U(0,1)$ and we can obtain the model ${\phi _\theta }(\tilde t)$. Based on the model  ${\phi _\theta }(\tilde t)$,  
we  find the perturbation maximizing the loss within the range $S$,
\begin{equation}
\mathop {\max }\limits_{{\bf{\delta }} \in S} L({\phi _\theta }(\tilde  t),{\bf{x}} + {\bf{\delta }})
\end{equation}
We can use projected gradient descent method for the maximization. 
\begin{equation}
{{\bf{\delta }}_{k + 1}} = \mathop \prod \limits_S \left( {{{\bf{\delta }}_k} + \eta  \cdot {\mathop{\rm sgn}} \left( {{\nabla _{\bf{\delta }}}L({\phi _\theta }(\tilde t),{\bf{x}} + {\bf{\delta }})} \right)} \right),
\end{equation}
where $\prod \limits_S$ denotes the projection to the feasible perturbation space $S$, and $\rm sgn(\cdot)$ denotes element-wise sign function taking the value of either $1$ or $-1$.

After finding the perturbation $\hat \delta $ maximizing the loss, we would minimize the expectation. At each iteration, we make a gradient step for $\bm \theta$ as follows,
\begin{equation}
{\bf{\theta }} = {\bf{\theta }} - \eta  \cdot {\nabla _\theta }L({\phi _\theta }(\tilde t),{\bf{x}} + {\bf \hat \delta } )
\end{equation}

We show the robust connection for a pair of non-robust and non-robust models, a pair of non-robust and robust models, and  a pair of robust models in Figure \ref{fig: input_hessian_vs_adversarial_loss_robust_connection}. We can observe that with the robust training method, there is still a robustness loss barrier between the non-robust and robust models. However, for the robust connection of robust model pair and non-robust (regular) model pairs, there is no robustness loss barrier, verified by small loss variance and flat attack success rate on the path. Our results also suggest that there is always a loss barrier between non-robust and robust models, no matter using standard or robust loss functions for path connection, as intuitively the two models are indeed not connected in their respective loss landscapes. Moreover,
the attack success rate of adversarial examples are relatively small on the whole path compared with the robust connection of non-robust and robust models.

\paragraph{Model ensembling}
Here we test the performance of (naive) model ensembling against evasion attacks. Given two untampered and independently trained CIFAR-10 (VGG) models, we first build a regular connection of them. Then we randomly choose models on the path (randomly choose the value of $t$) and take the average output of these models as the final output if given an input image for classification. The adversarial examples are generated based on the start model $(t=0)$ or end model $(t=1)$ and we assume the attacker does not have any knowledge about the connection path nor the models on the path. We use these adversarial examples to test the performance of the model ensembling with the models on the connection. The attack success rate of adversarial examples can decrease from 85\% to 79\%. The defense improvement of this naive model ensembling strategy is not very significant, possibly due to the well-known transferrability of adversarial examples \citep{papernot2016transferability,su2018robustness}.  Similar findings can be concluded for the robust connection method.

\section{Experimental results on CIFAR-100}

To evaluate the proposed path connection method on more complicated image recognition benchmarks, we demonstrate its performance against backdoor and error-injection attacks on CIFAR-100 as shown in Figure \ref{fig: backdoor_injection_cifar100}. The experimental setting is similar to that of CIFAR-10, where the two end models are backdoored/error-injected, and the connection is trained with various bonafide data sizes. We can observe that our method is still able to remove the adversarial effects of backdooring or error-injection and repair the model. 

For the evasion attack, we investigate two cases, 1) the connection of two regular models and 2) the connection of regular and adversarially-trained models. The performance of adversarial training on CIFAR-100 is not as significant as that on CIFAR-10. So we do not investigate the connection of two adversarially-trained models.
Their loss and eigenvalue on the path are shown in Figure \ref{fig: input_hessian_vs_adversarial_loss_cifar100}. We can observe that there is also a high correlation between the robustness loss (loss of adversarial examples) and the largest eigenvalue of input Hessian.

\begin{figure}[h]   
 \centering
\begin{tabular}{p{1.9in}p{1.9in}p{1.3in}}
\includegraphics[align=c,width=1.9in]{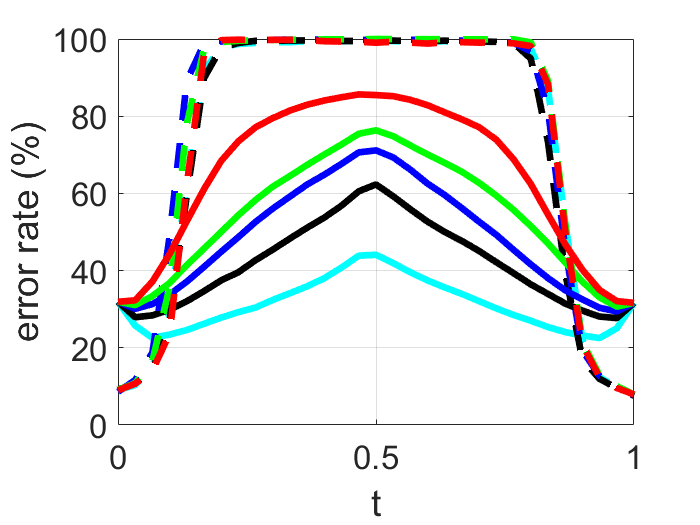}  &
\includegraphics[align=c,width=1.9in]{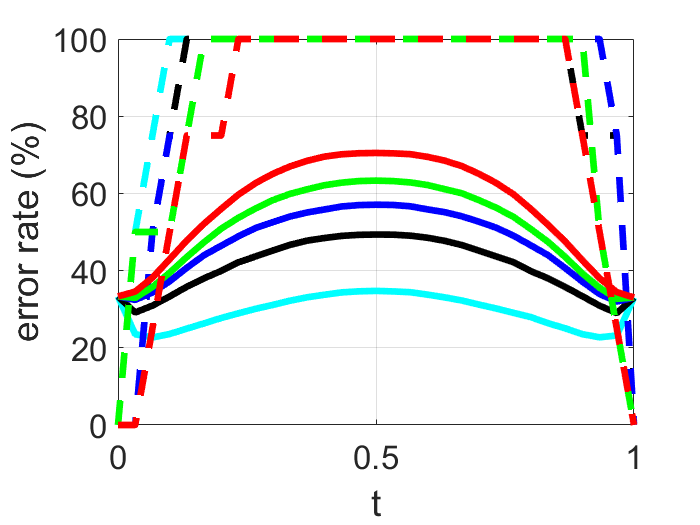} &
\includegraphics[align=c,width=1.3in]{figs_new/rebuttal/legend.png} 
\\
\parbox{1.9in}{\footnotesize \centering (a) Model connection against backdoor attack}
 & \parbox{1.9in}{\footnotesize \centering  (b) Model connection against error-injection attack} & \parbox{1.3in}{\footnotesize \centering  (c) Legend}
  \\
\end{tabular}
\caption{ Error rate against  poison and  error-injection attacks for CIFAR-100 (VGG).}
\label{fig: backdoor_injection_cifar100}
\end{figure}

\begin{figure}[t]    
 \centering
\begin{tabular}{p{0.1in}p{1.8in}p{1.8in}p{1.1in}}
 & \parbox{1.6in}{\centering \footnotesize Connection of regular models (PCC=0.74)} &  
\parbox{1.6in}{\centering \footnotesize  Connection of regular and adversarially-trained models (PCC=0.98)}    & 
\parbox{0.85in}{\centering \footnotesize Legend  } \\
\vspace{-0.4in} \rotatebox{90}{\parbox{0.6in}{\centering \footnotesize loss \& eigenvalue }}  &  \includegraphics[align=c,width=1.8in]{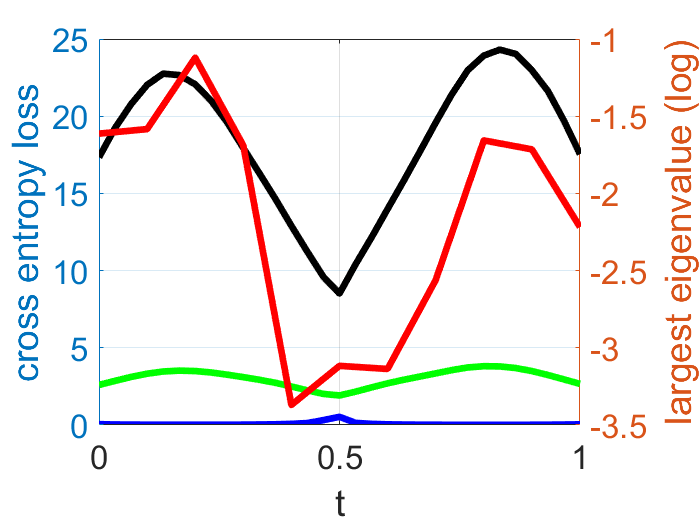} &
\includegraphics[align=c,width=1.8in]{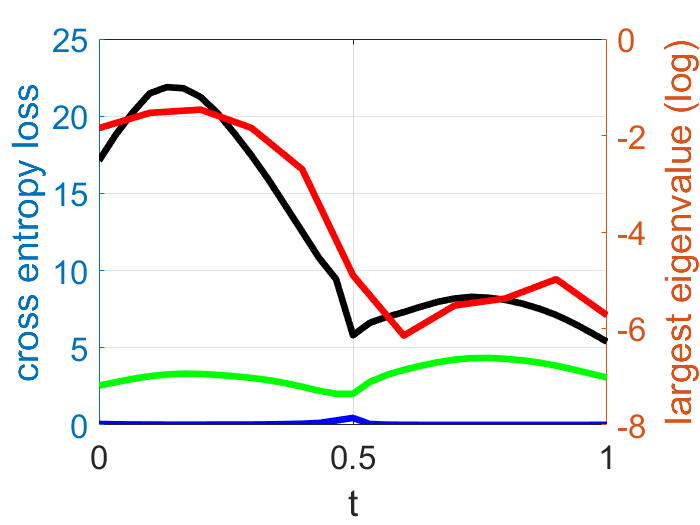} & 
 \includegraphics[align=c,width=1.1in]{figs_new/loss_legend.png} 
\\ 
\vspace{-0.5in} \rotatebox{90}{\parbox{0.9in}{\centering \footnotesize error rate \& \\ attack success rate }} & \includegraphics[align=c,width=1.8in]{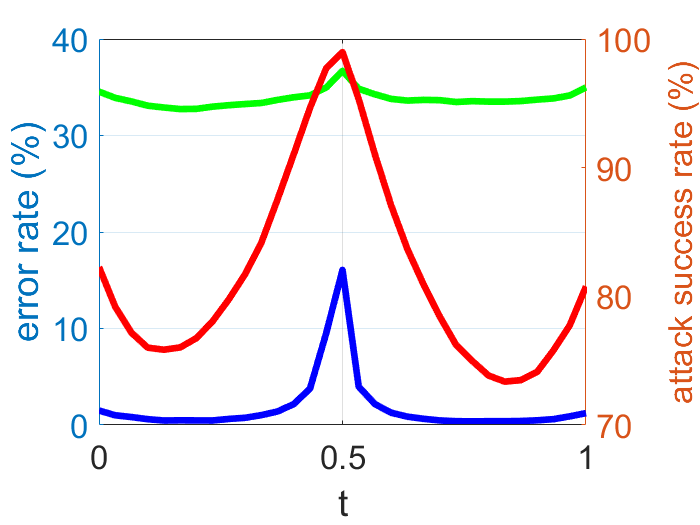} &
\includegraphics[align=c,width=1.8in]{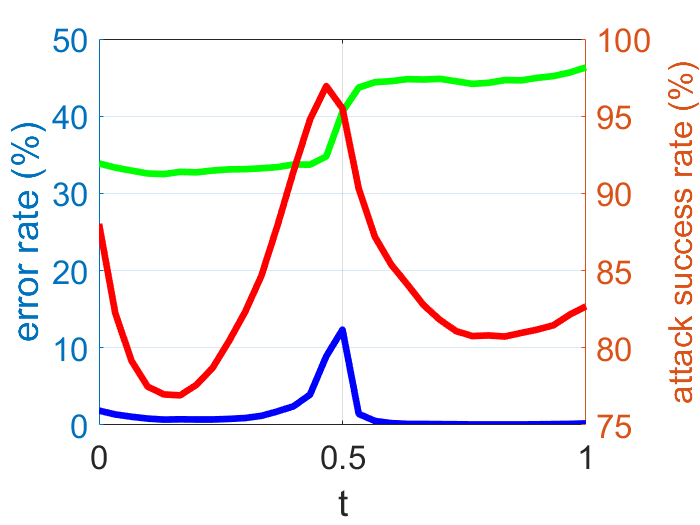} & 
\includegraphics[align=c,width=1.1in]{figs_new/error_legend.png}
\end{tabular}
\caption{Loss, error rate, attack success rate and largest eigenvalue of input Hessian on the path connecting different model pairs on CIFAR-100 (VGG) using standard loss. The error rate of training/test data means standard training/test error, respectively. In all cases, there is no standard loss barrier but a robustness loss barrier.
There is also a high correlation between the robustness loss and the largest eigenvalue of input Hessian, and their Pearson correlation coefficient (PCC) is reported in the title.} 
\label{fig: input_hessian_vs_adversarial_loss_cifar100}
\end{figure}

\section{Fine-tuning with various hyper-parameters}
We demonstrate the performance of fine-tuning with various hyper-parameter configurations in this section. For CIFAR-10 (VGG), we perform fine-tuning with different learning rate and the number of total epochs with the bonafide data of 2500 images and 1000 images, respectively. The clean accuracy and attack accuracy are shown in Figure \ref{fig: finetuning}. We also plot the clean and attack accuracy obtained through our path connection method from Table 2  in Figure \ref{fig: finetuning} as a reference.

As observed from Figure \ref{fig: finetuning} (a), larger learning rate (such as 0.05) can decrease the attack accuracy more rapidly, but the clean accuracy may suffer from a relatively large degradation. Small learning rate  (such as 0.01) can achieve high clean accuracy, but the attack accuracy may decrease with a much slower speed, leading to high attack accuracy when fine-tuning stops. This is more obvious if we use less bonafide data (reducing data size from 2500 to 1000) as shown in Figure \ref{fig: finetuning} (b). Fine-tuning performs worse with lower clean accuracy and higher attack accuracy. Since fine-tuning is quite sensitive to these hyper-parameters,
we conclude that it is not easy to choose an appropriate learning rate and the number of fine-tuning epochs, especially considering the user is not able to observe the attack accuracy in practice. On the other hand,
in  Figure \ref{fig: finetuning} (a) our path connection method can achieve the highest clean accuracy. In Figure \ref{fig: finetuning} (b), although the clean accuracy of $lr=0.01$ is higher than that of path connection, its attack accuracy remains high (about 40\%), which is much larger than that of path connection (close to 0\%). 

\begin{figure}[h]   
 \centering
\begin{tabular}{p{1.9in}p{1.9in}p{1.1in}}
\includegraphics[align=c,width=1.9in]{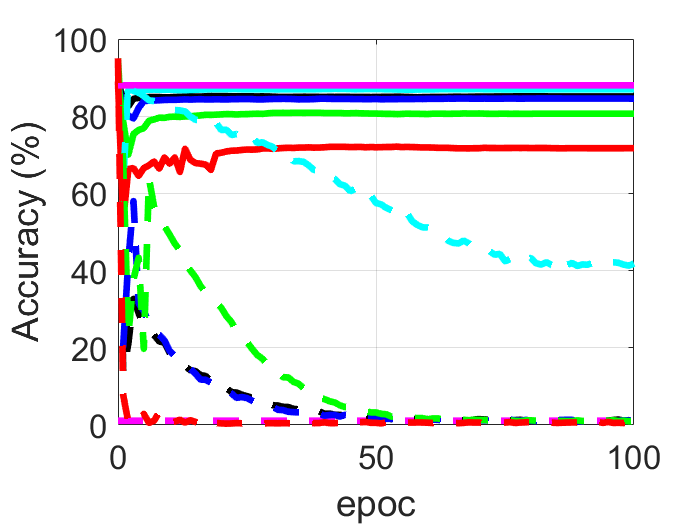}  &
\includegraphics[align=c,width=1.9in]{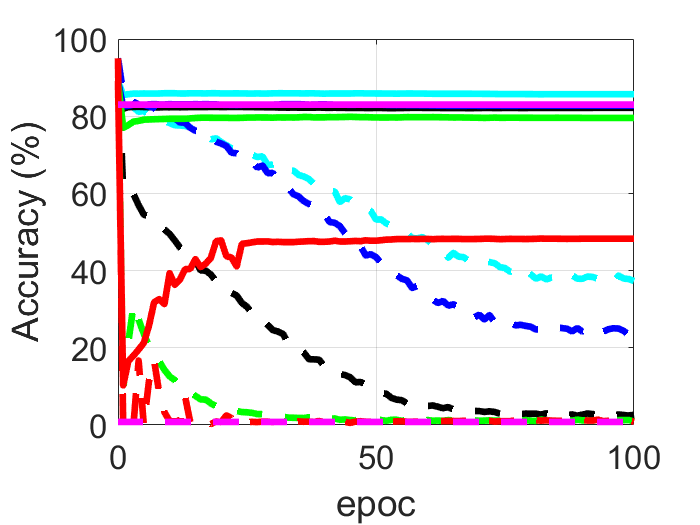} &
\includegraphics[align=c,width=1.1in]{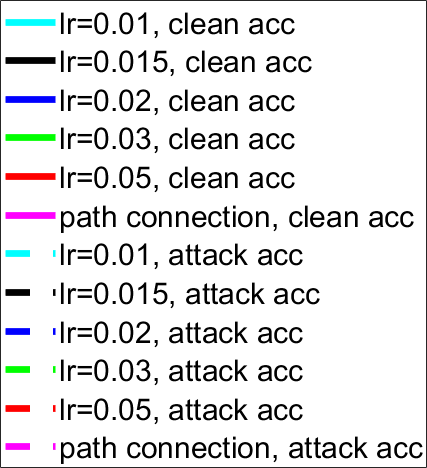} 
\\
\parbox{1.9in}{\footnotesize \centering (a) fine-tuning with  2500 samples}
 & \parbox{1.9in}{\footnotesize \centering  (b) fine-tuning with  1000 samples} & \parbox{1.1in}{\footnotesize \centering  (c) Legend}
  \\
\end{tabular}
\caption{ Test accuracy and attack accuracy for fine-tuning on CIFAR-10 (VGG).}
\label{fig: finetuning}
\end{figure}

\section{Stability Analysis}

In Appendix E, we perform multiple runs for the Gaussian noise experiment and only report the average accuracy. The variance is not reported since the average accuracy is able to demonstrate that the Gaussian noise method is not a good choice for removing adversarial effects. 

To investigate the stability of the path connection method with respect to every possible factor, it will cost considerable amount of time and resource to run all experiment setups considering the various attack methods, datasets, and model architectures. So here we mainly perform one representative experiment setup with multiple runs and show their mean and standard deviation. 


Figure \ref{fig: error_bars} shows the error bars of the error rate computed over 10 runs for path connection against  backdoor attack. The dataset is CIFAR-10 and the model architecture is ResNet. For each bonafide data size, we train 10 connections with different hyper-parameter settings, that is, starting from random initializations and using various learning rates (randomly set learning rate to 0.005, 0.01 or 0.02). 
Their average value and standard  deviation  are shown in Table \ref{tab: backdoor_comparison_error_bar}.  We can observe that although the connection may start from different initializations and trained with different learning rates,  their performance on the path are close with a relatively small variance, demonstrating the stability of our proposed method.

\begin{figure}[h]   
 \centering
\begin{tabular}{p{2.3in}p{2.3in}}
\includegraphics[align=c,width=1.9in]{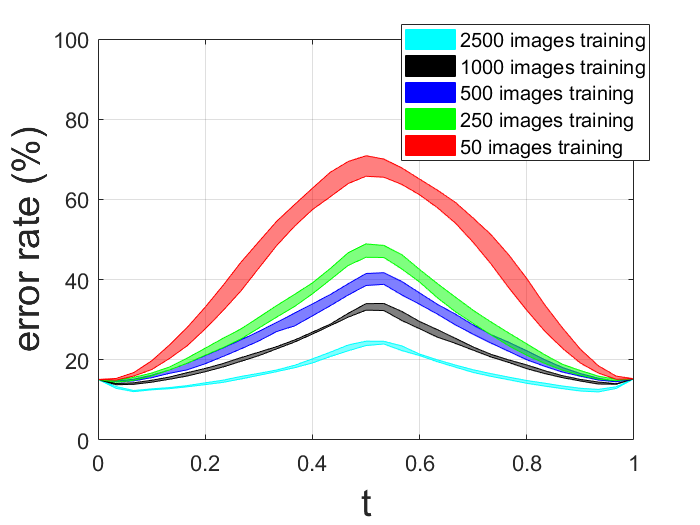}  &
\includegraphics[align=c,width=1.9in]{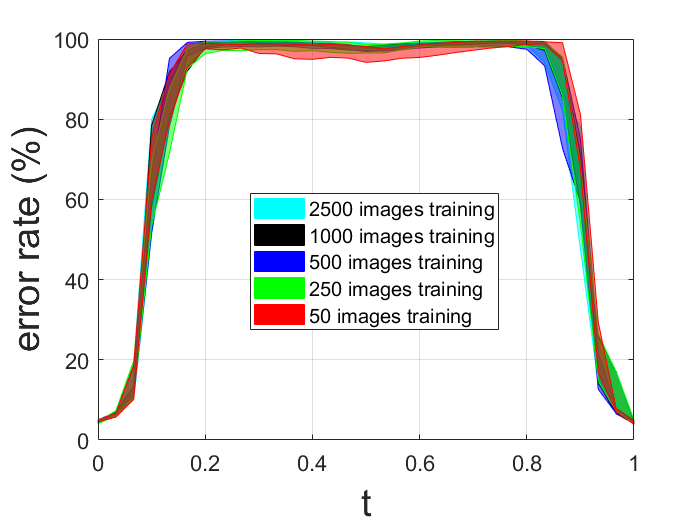} 
\\
\parbox{2.3in}{\footnotesize \centering (a) test error bar}
 & \parbox{2.3in}{\footnotesize \centering  (b) attack error bar} 
  \\
\end{tabular}
\caption{ Error rate against backdoor  attack for CIFAR-10 (ResNet).}
\label{fig: error_bars}
\end{figure}

\begin{table}[h]
\begin{center}
\caption{Performance comparison of path connection against single-target backdoor attack. The clean/backdoor accuracy means standard-test-accuracy/attack-success-rate, respectively.} 
\label{tab: backdoor_comparison_error_bar}
\scalebox{0.82}{
\begin{threeparttable}
\begin{tabular}{c|c|c|c|c|c|c|c}
\toprule[1pt]
& &  data size & 2500 & 1000  & 500  &  250 & 50\\
\midrule[1pt]
\multirow{ 8}{*}{\makecell{CIFAR-10 \\ (ResNet)}}   &\multirow{ 4}{*}{ \makecell{Clean \\  Accuracy}}&
$t=0.2$ & 87$\pm$0.21 \% &  83 $\pm$0.2 \% & 80.4 $\pm$0.72\%  & 78.4 $\pm$0.82\%  &  69 $\pm$1.9\% \\
& &$t=0.4$  &  80.5$\pm$0.37\% & 73.5$\pm$0.19\% &   67.7$\pm$1.0\%&  62.1$\pm$1.0\%&  40.5$\pm$2.5 \% \\
& &$t=0.6$  &  78.8$\pm$0.16\% & 71.6$\pm$0.7\% &   64.8$\pm$1.3 \%&  58.8$\pm$1.4\%&   36.9$\pm$1.4 \%\\
& & $t=0.8$  & 85.4 $\pm$0.25\% & 81.8$\pm$0.35\% &   78.8$\pm$0.84 \%& 76.8$\pm$1.1\%&  64$\pm$2.9\% \\
\cline{2-8}
&\multirow{ 4}{*}{ \makecell{Backdoor \\ Accuracy }}  & $t=0.2$  & 1.2$\pm$0.47 \% & 1.6$\pm$0.54\%  & 1.3$\pm$0.64 \%& 1.9$\pm$1\% & 2.6$\pm$0.54 \%  \\
&  & $t=0.4$   & 0.8$\pm$0.18 \% & 1.3$\pm$0.32\%  & 1.6$\pm$0.54\%  & 1.8$\pm$0.9\%  &3.2 $\pm$1.4\%\\
&  & $t=0.6$   & 1.0$\pm$0.16\%  & 1.5$\pm$0.41\%  & 1.6$\pm$0.42\%  & 1.6$\pm$0.57\%  &3.2$\pm$1.3\% \\
& &  $t=0.8$   & 0.8$\pm$0.37\%  & 0.86$\pm$0.33\%  & 1.0$\pm$0.78 \%  &  1.1$\pm$0.52\%  &1.8$\pm$0.3\% \\
\bottomrule[1pt]
\end{tabular}
\end{threeparttable}
}
\end{center}
\end{table}

\section{Strategy to choose the parameter $t$}
In our proposed method, we need to choose a model on the path as the  repaired model, that is, choosing the value of $t$. For different datasets/models, the best choice of $t$ may vary. So we discuss some general principles to choose an appropriate $t$ in this section. 

We note that in practice the user is not able to observe the attack error rate as the user does not have the knowledge about certain attacks. If the user is able to observe the accuracy on the whole clean test set, we suggest the user to choose the model (a value of $t\in[0,1]$) with a test accuracy $a-\Delta a$, where $a$ is the accuracy of the end model and $\Delta a$ represents a threshold. Based on the performance evaluations (Figures \ref{fig: backdoor_cifar_VGG}, \ref{fig: injection_cifar_VGG}, \ref{fig: backdoor_cifar_appendix}, \ref{fig: backdoor_svhn}, and \ref{fig: injection_cifar_res}), setting $\Delta a$ to 6\% should be an appropriate choice, which is able to eliminate the effects of all attacks without significantly sacrificing the clean accuracy. 

If the user is not able to access the accuracy of the whole clean test set, the user has to choose $t$ only based on the bonafide data. In this case, we suggest the user to use the $k$-fold cross-validation method to assess the test accuracy. This method  first shuffles the bonafide data randomly and splits it into $k$ groups. Then one group is kept to test the accuracy on the learned path and the remaining $k-1$ groups are used to train the path connection. The process is repeated for each group.
We perform additional experiments with the 5-fold cross validation method for CIFAR-10 (VGG) and  SVHN (ResNet). The average validation error rate on the hold-out set and attack error rate against backdoor attack is shown in Figure  \ref{fig: backdoor_injection_5fold}. The error-injection attack is easier to counter and hence we do not explore the error-injection experiments. 

We can observe that since the validation data size reduces to a much smaller value (one fifth of the bonafide data size), the test error rate becomes less stable. But it generally follows the trends of the test error rate on the whole clean test set (see Figure \ref{fig: backdoor_cifar_appendix}). So by utilizing the $k$-fold cross-validation method, we can obtain the test accuracy on the limited validation set. Then the aforementioned threshold method can be used to choose the model on the path. Here we suggest setting $\Delta a$ to 10\%, a more conservative threshold than the former case, as the user now does not has access to the accuracy of the whole test set. We also note that the performance of bonafide data with 50 images has a large deviation from other data size settings, so we suggest to use a larger bonafide data size with the k-fold cross validation method.

\begin{figure}[h]   
 \centering
\begin{tabular}{p{1.9in}p{1.9in}p{1.3in}}
\includegraphics[align=c,width=1.9in]{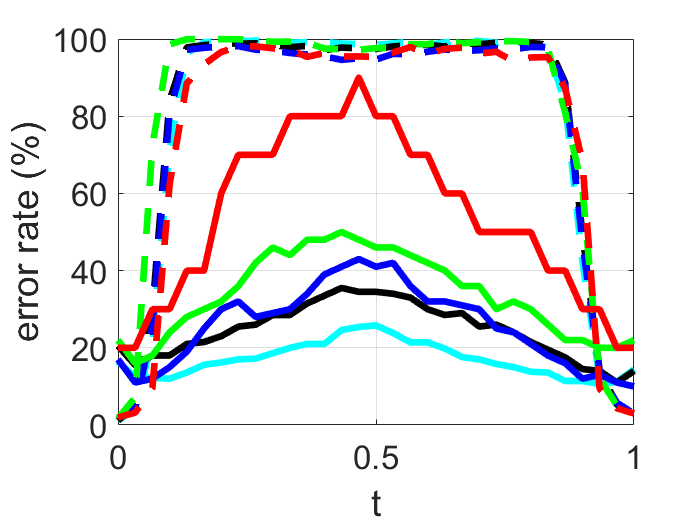}  &
\includegraphics[align=c,width=1.9in]{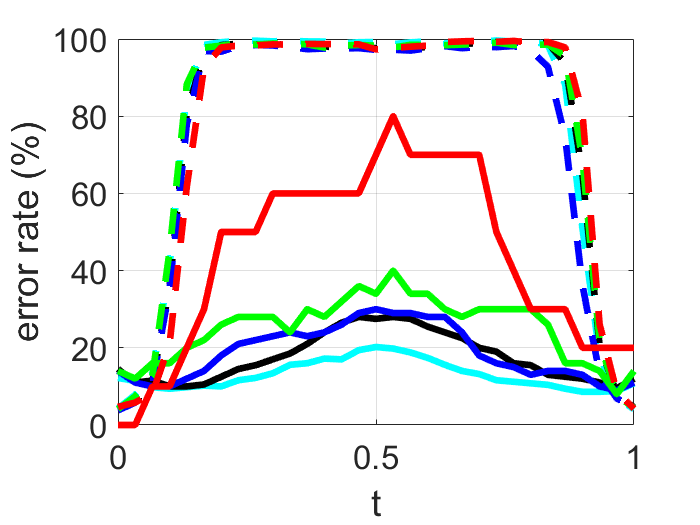}  &
\includegraphics[align=c,width=1.3in]{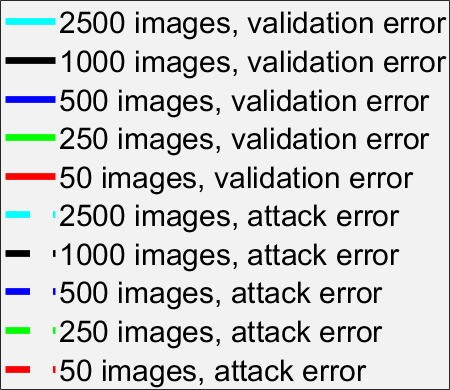} 
\\
\parbox{1.9in}{\footnotesize \centering (a) Model connection for CIFAR-10 (VGG)} & 
\parbox{1.9in}{\footnotesize \centering (b) Model connection for SVHN (ResNet)} & 
\parbox{1.3in}{\footnotesize \centering  (c) Legend}
  \\
\end{tabular}
\caption{ Average 5-fold cross validation error rate and attack error rate against  backdoor attack.}
\label{fig: backdoor_injection_5fold}
\end{figure}

\end{document}